\newcommand{\maketheorem}[3][theorem]{
        \expandafter\let\csname #2\endcsname\undefined
        \newaliascnt{#2}{#1}
        \newtheorem{#2}[#2]{#3}
        \aliascntresetthe{#2}
        \expandafter\def\csname#2name\endcsname~##1\null{#3~##1\null}
}
\def\1{\bm{1}}
\DeclareMathAlphabet{\mathsfit}{\encodingdefault}{\sfdefault}{m}{sl}
\SetMathAlphabet{\mathsfit}{bold}{\encodingdefault}{\sfdefault}{bx}{n}
\def\gD{{\mathcal{D}}}
\def\gL{{\mathcal{L}}}
\def\gT{{\mathcal{T}}}
\newcommand{\E}{\mathbb{E}}
\newcommand{\R}{\mathbb{R}}
\DeclareMathOperator{\Ker}{Ker}
\DeclareMathOperator{\Ima}{Im}
\title{Meta-Prior: Meta learning for Adaptive Inverse Problem Solvers}
\author{Matthieu Terris, Thomas Moreau \\
Université Paris-Saclay, Inria, CEA\\
Palaiseau, 91120, France\\
\texttt{\{matthieu.terris,thomas.moreau\}@inria.fr} \\
}
\date{}
\definecolor{linkcolor}{RGB}{92,92,192}
\begin{document}

\maketitle

\begin{abstract}
Deep neural networks have become a foundational tool for addressing imaging inverse problems. They are typically trained for a specific task, with a supervised loss to learn a mapping from the observations to the image to recover. However, real-world imaging challenges often lack ground truth data, rendering traditional supervised approaches ineffective. Moreover, for each new imaging task, a new model needs to be trained from scratch, wasting time and resources. To overcome these limitations, we introduce a novel approach based on meta-learning. Our method trains a meta-model on a diverse set of imaging tasks that allows the model to be efficiently fine-tuned for specific tasks with few fine-tuning steps. We show that the proposed method extends to the unsupervised setting, where no ground truth data is available.
In its bilevel formulation, the outer level uses a supervised loss, that evaluates how well the fine-tuned model performs, while the inner loss can be either supervised or unsupervised, relying only on the measurement operator. This allows the meta-model to leverage a few ground truth samples for each task while being able to generalize to new imaging tasks. We show that in simple settings, this approach recovers the Bayes optimal estimator, illustrating the soundness of our approach. We also demonstrate our method's effectiveness on various tasks, including image processing and magnetic resonance imaging.
\end{abstract}

\section{Introduction}

\looseness=-1
Linear inverse imaging problems consist of recovering an image through incomplete, degraded measurements.
Typical examples include image restoration~\citep{zhou2020awgn, liang2021swinir}, computed tomography~\citep{bubba2019learning}, magnetic resonance imaging (MRI;~\citealt{knoll2020deep}) and radio-astronomical imaging~\citep{onose2016scalable}. While traditional techniques are based on variational approaches~\citep{vogel2002computational}, neural networks have progressively imposed themselves as a cornerstone to solve inverse imaging problems~\citep{gilton2021deep, genzel2022solving, mukherjee2023learned}. Given the knowledge of a measurement operator, and given a dataset of proxies for ground truth images, one can design a training set with input-target pairs for supervised learning~\citep{zbontar2018fastmri}.
While such a strategy has proven tremendously efficient on some problems, such as image restoration and MRI, they remain difficult to use in many situations. As they require large training sets, their training is not possible when no proxy for ground truth data is available, for instance when no fully sampled data is available (\emph{e.g.} in MRI, see \citealt{shimron2022implicit}), or when data is extremely scarce. Moreover, as each network is trained for a specific measurement operator, it does not generalize to other measurement operators and needs to be retrained when the operator changes.

Several techniques have been proposed to circumvent these drawbacks. To make the network adaptive to the operator, unrolled neural networks directly leverage the knowledge of the measurement operator in their architecture~\citep{adler2018learned, hammernik2023physics} but these approaches remain sensitive to distribution shifts in both  measurement operators and image distribution.
Other adaptive approaches are plug-and-play methods~\citep{ venkatakrishnan2013plug, romano2017little, ryu2019plug, zhang2021plug} and their recent diffusion extensions~\citep{zhu2023denoising}, which use generic denoiser to inject prior information into variational solvers.
These methods are adaptive to the operator as the variational backbone accounts for the sample-specific measurement operator.
Yet, the performance of the associated implicit prior tends to decrease when applied to data beyond its training distribution, limiting its ability to generalize.
Moreover, the chosen architecture and training must be constrained to ensure the stability and convergence of the variational method~\citep{Pesquet2021, hurault2021gradient}.
When few to no examples are available, self-supervised training losses have been proposed leveraging equivariant properties of both the target data distribution and the measurement operators~\citep{chen2022robust}.
Data augmentation also has a significant impact on the robustness of the model in this setting and its ability to generalize to unseen distribution~\citep{rommel2021cadda}.

Meta-learning provides a framework for enabling efficient generalization of trained models to unseen tasks~\citep{finn2017model, raghu2020rapid, rajeswaran2019meta, hospedales2021meta}. Instead of training a model on a fixed, single task similar to the one that will be seen at test time, a so-called meta-model is trained simultaneously on multiple tasks, while ensuring that its state is close to the optimal state for each individual task. As a consequence, the resulting meta-model can be seen as a barycenter of optimal states for different tasks and appears as an efficient initialization state for fine-tuning the meta-model on new tasks. 
This approach has proven successful in a variety of domains, prominent examples including few-shot learning, reinforcement learning for robotics, and neural architecture search, among others \citep{alet2018modular, elsken2020meta}.

In this work, we present a meta-learning strategy that involves training a versatile meta-model across a range of imaging tasks while fine-tuning its inner states for task-specific adaptation. We explore the bilevel formulation of the meta-learning problem, leading to novel self-supervised fine-tuning approaches, particularly valuable in scenarios where ground truth data is unavailable. More precisely, we show that the meta-model can be fine-tuned on a specific task with a loss enforcing fidelity to the measurements only, without resorting to additional assumptions on the target signal.
We analyze the dynamics of the learned parameters, showing that task-specific parameters adapt to the measurement operator, while the meta-prior completes the missing information in its kernel. Our experiments provide empirical evidence of the approach's effectiveness, demonstrating its ability to fine-tune models in a self-supervised manner for previously unseen tasks. The proposed approach also demonstrates good performance in a supervised setup. In a broader context, our findings suggest that meta-learning has the potential to offer powerful tools for solving inverse problems.

\section{Related works}

The meta-learning framework has seen extensive use in various computer vision tasks but has yet to be fully explored in the context of solving inverse problems in imaging. Nevertheless, its underlying bilevel optimization formulation shares similarities with techniques commonly employed in addressing imaging inverse problems, e.g. for hyperparameter tuning. In this section, we offer a concise literature review, shedding light on the potential synergies between meta-learning and well-established methods for tackling challenges in imaging.

\textbf{Meta learning for vision tasks} Due to its successes in task adaptation to low data regimes, meta-learning is widely spread in various fields of computer vision, proposing an alternative to other widely used self-supervised techniques in the computer vision literature \citep{chuang2020debiased, caron2021emerging, dufumier2023integrating, walmer2023teaching}. It has demonstrated successes on a few shot learning tasks, where one aims at fine-tuning a model on a very limited amount of labeled data; for instance in image classification~\citep{vinyals2016matching, khodadadeh2019unsupervised, chen2021metadelta, chen2021meta}, in image segmentation~\citep{tian2020prior, yang2020prototype} or in object detection~\citep{wang2019meta, zhang2022meta}. Despite recent links between meta-learning and image-to-image translation~\citep{eißler2023howmuchmetalearning}, its utilization in such tasks remains relatively uncommon.

\textbf{Unsupervised meta-learning} In \citet{khodadadeh2019unsupervised}, the authors propose a method for unsupervised meta-learning, relying on augmentation techniques. This is reminiscent of methods such as equivariant learning or constrastive learning. \citet{antoniou2019learning} propose to mix unsupervised and supervised loss terms at the inner problem, akin to what we propose in this work. We note that in general, efforts to improve the generalization capabilities of meta-learning (or related approaches) often rely on strategies leveraging both the discrete nature of the classification problem and the encoding nature of the problem that are specific to classification tasks~\citep{snell2017prototypical, zhang2022meta}, which do not apply to image-to-image translation tasks. For instance, the meta learning from \citet{xu2021unsupervised} relies on both cluster embedding and data augmentation.

\textbf{Model robustness in imaging inverse problems} Deep neural networks for imaging inverse problems, and more generally, for image-to-image translation tasks, tend to be trained in a supervised fashion on datasets containing the operators that will be seen at test time. This is the case in MRI imaging~\citep{zbontar2018fastmri, knoll2020deep} or in image restoration~\citep{zhang2021designing, zhou2020awgn}.
The efficiency and robustness of the model then strongly rely on the diversity of the training set, thus sparking interest in augmenting the dataset with realistic samples~\citep{rommel2021cadda, zhang2021designing}.
In order to lighten the dependency on the measurement operator, \citet{chen2022robust} show that the neural network can be trained without resorting to the use of ground truth data, solely relying on the equivarience properties of the measurement operator. Their method results in a fully unsupervised training setting. 

\textbf{Bilevel optimization for imaging tasks} A longstanding problem in using variational methods for solving inverse problems is the choice of hyper-parameters; bilevel optimization techniques have been proposed to fine-tune these parameters efficiently \citep{kunisch2013bilevel, ochs2015bilevel, holler2018bilevel}.
The recent work \citet{ghosh2022learning} proposes to learn a convex model tool similar to those used in the implicit meta-learning literature.
In \citet{riccio2022regularization}, the authors propose a bilevel formulation of the training of a deep equilibrium model, where the inner problem computes the limit point of the unrolled model.

\textbf{Notations} Let $A$ be a linear operator; $A^\top$ denotes the adjoint of $A$ and $A^\dagger$ its Moore-Penrose inverse; $\Ker(A)$ and $\Ima(A)$ denote the kernel and range of $A$, respectively. For a function $f:X\to Y$, we denote by $f\big|_{S}$ the restriction of $f$ to the set $S\subset X$.

\section{Meta learning for inverse problems}

\looseness=-1
Instead of learning a specific model for various different tasks, the idea of meta-learning is to learn a shared model for all the tasks, that can be easily adapted to each task with a few steps of fine-tuning~\citep{finn2017model}.
We propose to extend this approach to the context of inverse imaging problems.
In this context, we consider that we have $I$ imaging tasks $\{\gT_i\}_{i=1}^I$.
Each of these tasks is described through a given linear operator $A_i\in\R^{m_i\times n}$ and a set of examples $\gD_i = \{(x_j^{(i)}, y^{(i)})\}_{j=1}^{N_i}$, where $x^{(i)}_j \in\R^n$ is the image to recover and $y^{(i)}_j = A_i x^{(i)}_j +\epsilon_j^{(i)} \in\R^{m_i}$ is the associated measurement.
Traditional approaches learn a model $f_{\theta_i}$ for each of the task $\gT_i$ by minimizing either the supervised or the unsupervised loss:
\begin{equation}
\begin{aligned}
\theta_i = \underset{\theta}{\text{argmin}} \,\,
    & \mathcal{L}_{\text{sup}}(f_{\theta}, \gT_i, \gD_i) = \sum_{j=1}^{N_i} \frac{1}{2}\big\|f_{\theta}(y^{(i)}_j, A_i)-x^{(i)}_j\big\|_2^2\enspace, \\
\text{or} \quad \theta_i = \underset{\theta}{\text{argmin}} \,\,
    & \mathcal{L}_{\text{uns}}(f_{\theta}, \gT_i, \gD_i) = \sum_{j=1}^{N_i} \frac{1}{2}\big\|A_i f_{\theta}(y^{(i)}_j, A_i)-y^{(i)}_j\big\|_2^2\enspace.
\end{aligned}
\label{eq:traditional_learning}
\end{equation}
While the supervised loss requires ground truth data $x^{(i)}_j$, the unsupervised loss only requires access to the measured data $y^{(i)}_j$.
In both cases, the learned model $f_{\theta_i}$ cannot be used for other tasks $\gT_k$ as it is not adaptive to the operator $A_k$.

The meta-learning strategy consists in training a model $f_\theta$ not only on one task but on a set of tasks $\gT_i$ while ensuring that the model $f_\theta$ can be adapted to each task $\gT_i$ with a few steps of fine-tuning.
As proposed originally by~\citet{finn2017model}, this so-called meta-model is trained on various tasks simultaneously, while the fine-tuning is performed by a single step of gradient descent.
In its implicit form~\citep{rajeswaran2019meta}, the meta-model solves the following bilevel optimization problem:
\begin{equation}
\begin{aligned}
\theta^* = \underset{\theta}{\text{argmin}} \,\, &\sum_{i=1}^I \mathcal{L}_{\text{outer}}(f_{\theta_i}, \gT_i, \gD_i^\text{test}) \\
\text{s.t. } \, \theta_i =\underset{\phi}{\operatorname{argmin}}\,\, \mathcal{L}_{\text{inner}}&(f_{\phi}, \gT_i, \gD_i^\text{train}) + \frac{\lambda}{2}\|\phi-\theta^*\|^2\,,\quad \forall i \in \{1,\dots I\}\enspace.
\end{aligned}
\label{eq:general_problem}
\end{equation}
Here, $\gD_i^\text{train}$ and $\gD_i^\text{test}$ denote respectively the training and test datasets for the task $\gT_i$, that are used to control that the model $f_{\theta_i}$ generalizes well to the task $\gT_i$.
The inner training loss $\mathcal{L}_{\text{inner}}$ corresponds to the loss used to learn the model's parameters $\theta_i$ for a given task.
It can be either the supervised loss $\mathcal{L}_{\text{sup}}$ or the unsupervised loss $\mathcal{L}_{\text{uns}}$, with an extra regularization term controlled by $\lambda>0$ to ensure that the model $f_{\theta_i}$ is close to the meta-model $f_{\theta^*}$.
When $\gL_{\text{inner}}$ uses $\gL_{\text{sup}}$ (resp. $\gL_{\text{uns}}$), we call this problem the \emph{supervised meta-learning} (resp. \emph{unsupervised meta-learning}).
Finally, the outer training loss $\mathcal{L}_{\text{outer}}$ is used to evaluate how well the model $f_{\theta_i}$ generalizes on the task $\gT_i$, using $\gL_{\text{sup}}$.

Essentially, the interest of this bilevel formulation arises from the inner problem's resemblance to a fine-tuning procedure on a task $\mathcal{T}_i$ from the meta-model's state $\theta^*$.
If the number of tasks $I$ presented during training is large enough, this model can be adapted to a novel unseen task $\gT_k$ by solving the inner problem from \eqref{eq:general_problem} for the new task of interest on a small dataset.
While both supervised and unsupervised formulations require access to some original data in the outer loss, it is important to note that in the unsupervised formulation, the fine-tuning to a new task can be performed without the need for ground truth data. 
This partially addresses the need for ground truth data to solve the inverse problem, as a model can be adapted to a given task can without accessing clean signals.
Moreover, in this meta-learning framework, models aggregate information from the multiple inverse problems seen during training, as in both formulations, the weights of each partially fine-tuned model benefit from samples of other tasks through the regularization with the distance to $\theta^*$.

It is well known that a major challenge in unsupervised inverse problems is to correctly estimate the original data $x$ in $\Ker(A_i)$~\citep{chen2022robust,Malezieux2023}.
Indeed, $\gL_{\text{uns}}$ does not vary when the estimated solution moves in the kernel of the measurement operator.
In the following, we investigate how meta-learning leverages multiple tasks to overcome this challenge.

\paragraph{Learning meta-priors for linear models}

\looseness=-1
In order to analyze the meta-prior learned with our approach, we restrict ourselves to a linear model for a noiseless inverse problem.
More precisely, given a linear operator $A_i$ and a signal $x$, we aim to estimate $x$ from measurements $y= A_ix$ using a linear model $f_\theta(y) = \theta y$.
Our bilevel problem thus reads
\begin{equation}
\begin{aligned}
\theta^* = \underset{\theta}{\text{argmin}} \,\, & \sum_{i=1}^I \sum_{x, y \sim \gD_i^{\text{test}}}\frac{1}{2}\|x-\theta_i y\|_2^2 \\
\text{s.t. }& \theta_i =\underset{\phi}{\operatorname{argmin}}\,\, \sum_{x, y \sim \gD_i^{\text{train}}}\frac{1}{2}\|A_i \phi y-y\|_2^2 + \frac{\lambda}{2}\|\phi-\theta\|^2
\end{aligned}
\label{eq:problem_linear}
\end{equation}
where $\gD_i^{\text{train}}$ and $\gD_i^{\text{test}}$ are respectively the train and test datasets, containing $x$ and $y$ samples.
The following result quantifies the behavior of $\theta^*$ and $\theta_i$ relative to the kernel of the measurement operator $A_i$.

\begin{restatable}{theorem}{linearModelGrad}
\label{thm:linear_model_grad}
Consider Problem~\eqref{eq:problem_linear} and assume that for all $i$, $y_i \in \Ima(A_i)$. Then
\begin{enumerate}[label=(\roman*)]
    \item During fine-tuning on a task $\gT_i$ (in either supervised or unsupervised settings), the fine-tuned weight $\theta_i$ satisfies $\Pi_{\Ker(A_i)}\theta_i = \Pi_{\Ker(A_i)}\theta^*$.
    \item Moreover, if the fine-tuning is performed with gradient descent and initialized at $\theta^*$, it holds at any step $t\in \mathbb{N}$ during optimization that the iterates $(\theta_i^{(t)})_{t\in\mathbb{N}}$ satisfy $\Pi_{\Ker(A_i)}\theta_i^{(t)} = \Pi_{\Ker(A_i)}\theta^*$.
    \item Assume that $\bigcap_i \Ker(A_i) = \{0\}$ and that $\sum_j A_ix_i^{(j)}x_i^{(j)\top}A_i^\top$ is full rank. Then the outer-loss for training the meta-model admits a unique minimizer.
\end{enumerate}
\end{restatable}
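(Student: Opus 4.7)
The plan is to chain three short arguments: (i) follows by projecting the first-order optimality condition of the inner problem onto $\Ker(A_i)$, (ii) lifts this to the gradient-descent iterates by a one-line induction, and (iii) uses (i) to split the outer loss into two orthogonal blocks whose combined Hessian is positive definite under the two hypotheses. For (i), strong convexity of the unsupervised inner problem gives the optimality condition
\[
\sum_j A_i^\top (A_i \theta_i y_j^{(i)} - y_j^{(i)})(y_j^{(i)})^\top + \lambda (\theta_i - \theta^*) = 0 ,
\]
and since every column of the first sum lies in $\Ima(A_i^\top) = \Ker(A_i)^\perp$, applying $\Pi_{\Ker(A_i)}$ from the left annihilates that sum and yields $\lambda\,\Pi_{\Ker(A_i)}(\theta_i - \theta^*) = 0$. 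The supervised inner case is handled by an analogous projection after decomposing $\phi = \Pi_{\Ker(A_i)}\phi + \Pi_{\Ker(A_i)^\perp}\phi$ and exploiting $y_j^{(i)} \in \Ima(A_i)$ to decouple the data term. For (ii), I proceed by induction on $t$: the base case is immediate from $\theta_i^{(0)} = \theta^*$, and for the step, applying $\Pi_{\Ker(A_i)}$ to the update $\theta_i^{(t+1)} = \theta_i^{(t)} - \eta[\nabla_\phi \mathcal{L}_{\text{inner}}(\theta_i^{(t)}) + \lambda(\theta_i^{(t)} - \theta^*)]$ kills the gradient term by the column-space argument of (i) and the regularizer by the inductive hypothesis.

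For (iii), I use (i) to split each $\theta_i y_j^{(i)} \in \R^n$ along $\Ker(A_i) \oplus \Ker(A_i)^\perp$, which yields the orthogonal decomposition
\[
F(\theta^*) = \tfrac{1}{2}\sum_{i,j}\|\Pi_{\Ker(A_i)}(x_j^{(i)} - \theta^* y_j^{(i)})\|^2 + \tfrac{1}{2}\sum_{i,j}\|\Pi_{\Ker(A_i)^\perp}(x_j^{(i)} - \theta_i y_j^{(i)})\|^2 .
\]
Strong convexity of the inner problem makes $\Pi_{\Ker(A_i)^\perp}\theta_i$ an invertible affine function $L_i$ of $\Pi_{\Ker(A_i)^\perp}\theta^*$ on the subspace of matrices with columns in $\Ker(A_i)^\perp$ (its Jacobian being essentially $\lambda(Y_i \otimes A_i^\top A_i|_{\Ker(A_i)^\perp} + \lambda I)^{-1}$ with $Y_i = \sum_j y_j^{(i)} y_j^{(i)\top}$). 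The second variation of $F$ in a direction $\Delta$ thus decomposes per task into the PSD contributions $\sum_j\|\Pi_{\Ker(A_i)}\Delta\, y_j^{(i)}\|^2$ and $\sum_j\|L_i[\Pi_{\Ker(A_i)^\perp}\Delta]\, y_j^{(i)}\|^2$; by the full-rank hypothesis on $\sum_j A_i x_j^{(i)} x_j^{(i)\top} A_i^\top = Y_i$, these vanish only when $\Pi_{\Ker(A_i)}\Delta = 0$ and $\Pi_{\Ker(A_i)^\perp}\Delta = 0$ respectively. Summing over $i$ and invoking $\bigcap_i \Ker(A_i) = \{0\}$ forces $\Delta = 0$, so the outer Hessian is positive definite and the minimizer is unique.

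The main obstacle is step (iii): after the clean decomposition from (i), the per-task Hessian contributions are individually blind to subspaces of $\R^{n\times m}$, and it is only upon summing over tasks and combining both hypotheses that the outer Hessian becomes strictly positive. Extracting the invertibility of $L_i$ on the appropriate subspace and phrasing the sum-over-tasks argument directly in terms of the two orthogonal projections—rather than expanding everything through Kronecker products—is the key to a readable proof.
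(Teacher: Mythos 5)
Your parts (i) and (ii) for the unsupervised inner loss coincide with the paper's argument: project the first-order optimality condition (resp.\ the gradient-descent update) onto $\Ker(A_i)$, observe that the data-fit gradient $A_i^\top(A_i\phi y-y)y^\top$ has its columns in $\Ima(A_i^\top)=\Ker(A_i)^\perp$ and is therefore annihilated, and close by induction from $\theta_i^{(0)}=\theta^*$. Part (iii) is where you genuinely diverge, and your route is the cleaner one. The paper writes the inner optimality condition as a Stein equation, obtains $\theta_i=K_i\theta+c_i$, then explicitly \emph{neglects} the $y_iy_i^\top$ term to identify $K_i=(\lambda\,\mathrm{Id}+A_i^\top A_i)^{-1}$, and concludes by counting equations against unknowns. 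You avoid both the approximation and the counting: the inner Hessian $Y_i\otimes A_i^\top A_i+\lambda I$ is positive definite, so $\Pi_{\Ker(A_i)^\perp}\theta_i$ is an exactly invertible affine function of $\Pi_{\Ker(A_i)^\perp}\theta^*$, and combined with (i) the outer loss splits orthogonally into a $\Ker(A_i)$ block seen directly by $\theta^*$ and a $\Ker(A_i)^\perp$ block seen through $L_i$; since everything is affine in $\theta^*$ the Gauss--Newton Hessian is exact and positive definiteness gives uniqueness. This actually uses (i) inside the proof of (iii), which the paper does not.

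Two genuine wrinkles. First, your per-task conclusion in (iii) is stated too strongly: if a single task forced both $\Pi_{\Ker(A_i)}\Delta=0$ and $\Pi_{\Ker(A_i)^\perp}\Delta=0$, then $\Delta=0$ already and the hypothesis $\bigcap_i\Ker(A_i)=\{0\}$ would be redundant. What task $i$ actually controls is $\Delta$ on the \emph{input} side: $Y_i=\sum_j A_ix_j^{(i)}x_j^{(i)\top}A_i^\top$ is full rank only on $\Ima(A_i)$, so the vanishing of your two quadratic forms yields $\Delta\,\Pi_{\Ima(A_i)}=0$, and it is the spanning condition $\sum_i\Ima(A_i)=\R^m$ (equivalently $\bigcap_i\Ker(A_i^\top)=\{0\}$, which coincides with the stated hypothesis for the symmetric mask operators of the paper's toy setting) that forces $\Delta=0$ after summing over tasks. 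Second, the supervised half of (i): your ``analogous projection \dots to decouple the data term'' does not go through, for the same reason the paper's ``similar arguments'' does not. The supervised data-fit gradient is $(\phi y-x)y^\top$, with no left factor of $A_i^\top$, so it is not killed by $\Pi_{\Ker(A_i)}$; the projected optimality condition reads $\Pi_{\Ker(A_i)}(\theta_i y-x)y^\top+\lambda\,\Pi_{\Ker(A_i)}(\theta_i-\theta^*)=0$ and in general gives $\Pi_{\Ker(A_i)}\theta_i\neq\Pi_{\Ker(A_i)}\theta^*$ --- consistent with the intuition that supervision \emph{does} inject information in the kernel. This gap is shared with the paper, but it is real: either supply a different argument for the supervised case or restrict (i)--(ii) to the unsupervised inner loss.
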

The proof is deferred to \autoref{sec:gradient_linear}.
We can draw a few insightful observations from these results.
First, (i) shows that the meta-prior plays a paramount role in selecting the right solution in $\Ker(A_i)$.
Indeed, as the gradient of the inner loss does not adapt the solution on the kernels of the $A_i$, its value is determined by the outcome of the meta-model on $\Ker(A_i)$.
This observation also holds for any number of steps used to solve the inner problem with gradient descent (ii), with an approach compatible with the standard MaML framework~\citep{finn2017model}.
Second, we notice from (iii) that as the number of tasks $I$ grows, the dimension of the nullspace restriction on the outer loss gradient decreases.
In the limiting case where $\bigcap_i \Ker(A_i) = \emptyset$, (iii) implies the existence of a unique solution to problem \eqref{eq:problem_linear}.
This suggests that increasing the number of tasks improves the model's adaptability to novel, previously unseen tasks $\gT_k$.
As a side note, we notice that the image space of the hypergradient is not penalized by the unsupervised nature of the inner problems. 

We stress that this approach differs from other unsupervised approaches from the literature, where one manages to learn without ground truth by relying on invariance properties, e.g. equivariance of the measurements with respect to some group action \citep{chen2022robust}.
Here, we instead suggest taking advantage of multiple measurement operators and datasets to learn meta-models with features compensating for the lack of information and distribution bias inherent to each inverse problem and dataset, in line with ideas developed by~\citet{Malezieux2023}.
Thus, our approach avoids relying on potentially restrictive equivariance assumptions on the signal or measurement operator.
This is in particular illustrated by (i/ii): if the meta-model has not learned a good prior in the kernel of $A_i$, the fine-tuning cannot bring any improvement over the meta-model weights.

In order to demonstrate that this approach is adapted to learn interesting priors, we consider a simple task where the goal is to recover multivariate Gaussian data from degraded observations.
More precisely,  we assume that the samples $x$ are sampled from a Gaussian distribution $\mathcal{N}(\mu, \Sigma)$, and we show that the Bayes optimal estimator is related to the solution of Problem~\eqref{eq:problem_linear}.
Recall that the Bayes' estimator is defined as
\begin{equation}
    \widehat{x}(y, A_i) = \underset{x'(y, A_i)\in \R^{n}}{\arg\min}\,\,\mathbb{E}\left[\|x-x'(y, A_i)\|_2^2\right]
    = \mathbb{E}[x | y, A_i],
\label{eq:bayes_est}
\end{equation}
where $y = A_ix$. Since $A_i$ is linear, one can derive a closed-form expression for the estimator $\widehat{x}$. We have the following result.

\begin{restatable}{lemma}{bayesEstimator}
\label{prop:bayes_linear}
Let $A_i$ a linear operator and assume that $x\sim \mathcal{N}(\mu, \Sigma)$ and $y = A_i x$. Then the Bayes' estimator \eqref{eq:bayes_est} satisfies:
\begin{equation}
\left\{
\begin{aligned}
\widehat{x}(y, A_i)_{\Ima(A_i^\top)} &=A_i^\dagger y\big|_{\Ima(A_i^\top)}, \\
\widehat{x}(y, A_i)_{\Ker(A_i)} &= \mu_{\Ker(A_i)}+\Sigma_{\Ker(A_i), \Ima(A_i^\top)}\left(\Sigma_{\Ima(A_i^\top)}\right)^{-1}(A_i^\dagger y- \mu_{\Ima(A_i^\top)}),
\end{aligned}
\right.
\label{eq:bayes_closed_form}
\end{equation}
where we have used the decomposition
\begin{equation*}
    \mu = \begin{pmatrix}
        \mu_{\Ima}(A_i^\top)\\
        \mu_{\Ker}(A_i)
    \end{pmatrix}
    \quad \text{and} \quad
    \Sigma = \begin{pmatrix}
        \Sigma_{\Ima(A_i^\top)} & \Sigma_{\Ima(A_i^\top), \Ker(A_i)}\\
        \Sigma_{\Ker(A_i),\Ima(A_i^\top)} & \Sigma_{\Ker(A_i)}
    \end{pmatrix}.
\end{equation*}
\end{restatable}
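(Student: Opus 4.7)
The plan is to exploit the orthogonal decomposition $\R^n = \Ima(A_i^\top)\oplus \Ker(A_i)$ to split $x$ into a component that is fully determined by $y$ and a component that must be estimated from a conditional Gaussian distribution. Accordingly, write $x = x_1 + x_2$ with $x_1 = \Pi_{\Ima(A_i^\top)} x$ and $x_2 = \Pi_{\Ker(A_i)} x$, and similarly split the mean and covariance into the blocks given in the statement.

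First, I would handle the range component. Since $y = A_i x = A_i x_1$ (as $A_i x_2 = 0$) and since $A_i$ restricted to $\Ima(A_i^\top)$ is a bijection onto $\Ima(A_i)$, the assumption $y\in\Ima(A_i)$ gives $A_i^\dagger y = x_1$ deterministically. Hence
\begin{equation*}
\widehat{x}(y, A_i)\big|_{\Ima(A_i^\top)} = \E[x_1 \mid y, A_i] = A_i^\dagger y\big|_{\Ima(A_i^\top)},
\end{equation*}
which is the first line of \eqref{eq:bayes_closed_form}.

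Second, I would handle the kernel component by conditioning. Because $x\sim\gN(\mu,\Sigma)$ and the projections onto $\Ima(A_i^\top)$ and $\Ker(A_i)$ are linear, $(x_1,x_2)$ is jointly Gaussian with mean $(\mu_{\Ima(A_i^\top)},\mu_{\Ker(A_i)})$ and the block covariance written in the statement. Conditioning on $y$ is equivalent to conditioning on $x_1 = A_i^\dagger y$, and the standard multivariate Gaussian conditioning formula yields
\begin{equation*}
\E[x_2 \mid x_1] = \mu_{\Ker(A_i)} + \Sigma_{\Ker(A_i),\Ima(A_i^\top)}\bigl(\Sigma_{\Ima(A_i^\top)}\bigr)^{-1}\bigl(x_1 - \mu_{\Ima(A_i^\top)}\bigr).
\end{equation*}
Substituting $x_1 = A_i^\dagger y$ gives the second line of \eqref{eq:bayes_closed_form}.

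The routine obstacle is to justify two technical points cleanly: (a) the identity $A_i^\dagger y = \Pi_{\Ima(A_i^\top)} x$ under the hypothesis $y\in\Ima(A_i)$, which follows from $A_i^\dagger A_i = \Pi_{\Ima(A_i^\top)}$ and the fact that the projection picks out the unique preimage in $\Ima(A_i^\top)$; and (b) the invertibility of $\Sigma_{\Ima(A_i^\top)}$, which should be assumed implicitly through nondegeneracy of $\Sigma$ on $\Ima(A_i^\top)$ (otherwise one replaces the inverse by a pseudoinverse on the appropriate subspace). Once these two points are settled, the result is an immediate application of Gaussian conditioning in the chosen basis.
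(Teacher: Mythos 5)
Your proposal is correct and follows essentially the same route as the paper's proof: decompose along $\Ima(A_i^\top)\oplus\Ker(A_i)$, observe that the range component is recovered deterministically via $A_i^\dagger y$, and apply the standard Gaussian conditioning formula for the kernel component after noting that conditioning on $y$ is equivalent to conditioning on $x_{\Ima(A_i^\top)}$. The technical points you flag (the identity $A_i^\dagger A_i=\Pi_{\Ima(A_i^\top)}$ and the implicit nondegeneracy of $\Sigma_{\Ima(A_i^\top)}$) are likewise left implicit in the paper, so nothing is missing.
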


The proof is given in \autoref{sub:proof_bayes}. 
We stress that this general result goes beyond the meta-learning framework and can be applied in the general supervised learning setting with inverse problems' solutions.
Indeed, Bayes' estimator can be seen as a solution to the (empirical risk) minimization problem at the outer level of \eqref{eq:problem_linear}, regardless of the inner level problem.
Yet, this result complements \autoref{thm:linear_model_grad} by giving an intuition on the expression for the model $\theta_i$ and the estimate $\widehat{x}(y, A_i) = \theta_i y$ that easily decompose on $\Ker(A_i)$ and $\Ima(A_i^\top)$. 
We notice that on $\Ima(A_i^\top)$, the solution is defined by the pseudoinverse $A_i^\dagger$, smoothed by the added regularization.
On the kernel space of $A_i$, the distribution of the signal $x$ comes into play, and the reconstructed signal is obtained from $\theta^*$, as the weighted mean of terms of the form $\Sigma_{\Ker(A_i), \Ima(A_i^\top)}\big(\Sigma_{\Ima(A_i^\top)}\big)^{-1}A_i^\dagger$.
This shows that meta-learning is able to learn informative priors when it can be predictive of the value of $\Ker(A_i)$ from the value of $\Ima(A_i)$.
In particular, we stress that in the case of uncorrelated signals $x$, \emph{i.e.} when $\Sigma$ is diagonal, the second line of \eqref{eq:bayes_closed_form} becomes $\widehat{x}(y, A_i)_{\operatorname{Ker}(A_i)} = \mu_{\operatorname{Ker}(A_i)}$.

\begin{figure}
\small
    \centering
    \begin{subfigure}[b]{0.18\textwidth}
    \includegraphics[width=\textwidth]{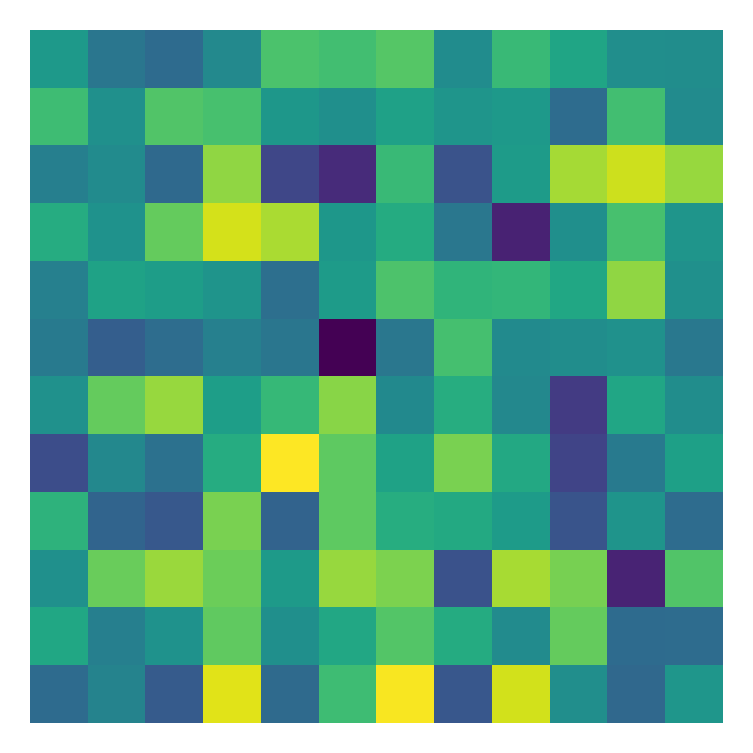}
    \caption{}
    \end{subfigure}
    \begin{subfigure}[b]{0.18\textwidth}
    \includegraphics[width=\textwidth]{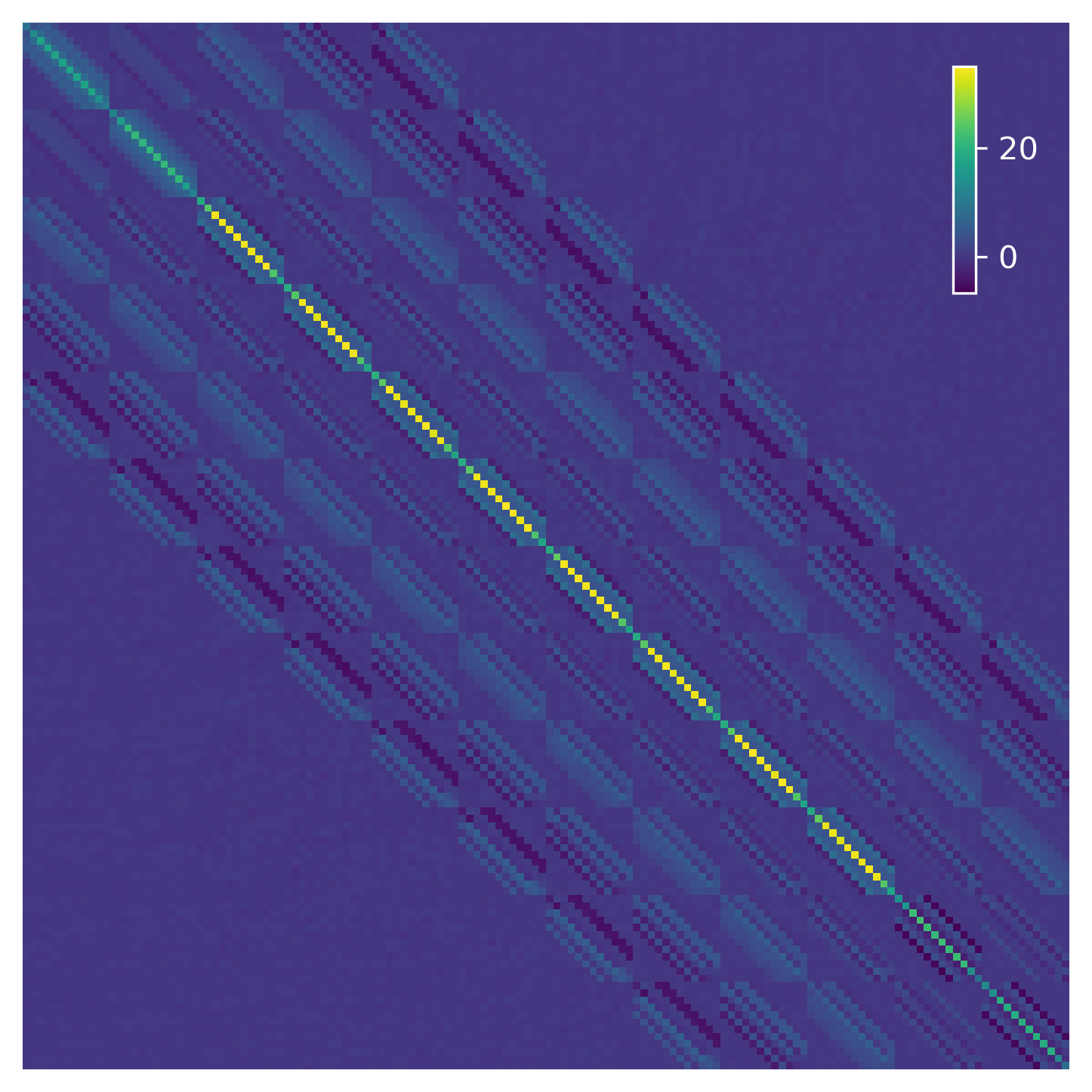}
    \caption{}
    \end{subfigure}
    \begin{subfigure}[b]{0.18\textwidth}
    \includegraphics[width=\textwidth]{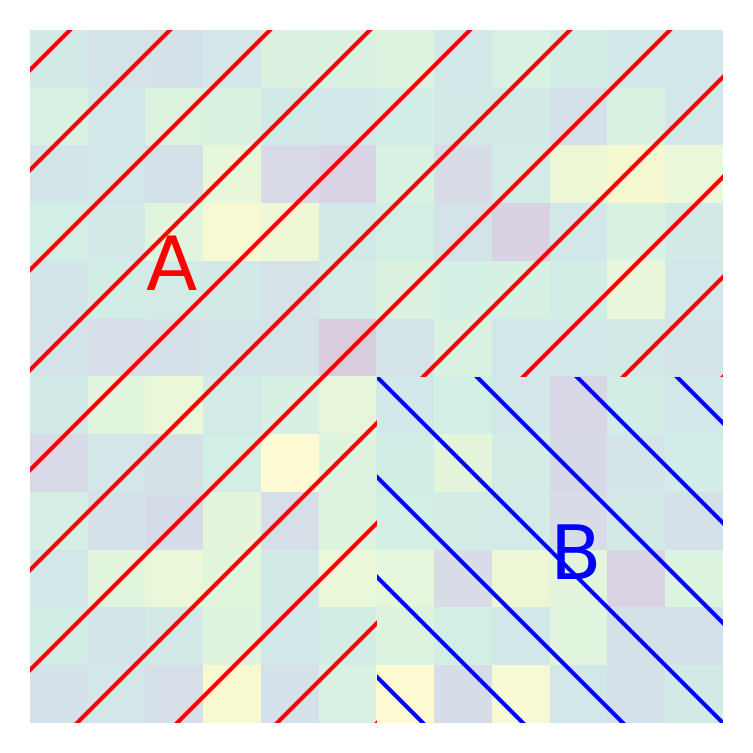}
    \caption{}
    \end{subfigure}
    \begin{subfigure}[b]{0.18\textwidth}
    \includegraphics[width=\textwidth]{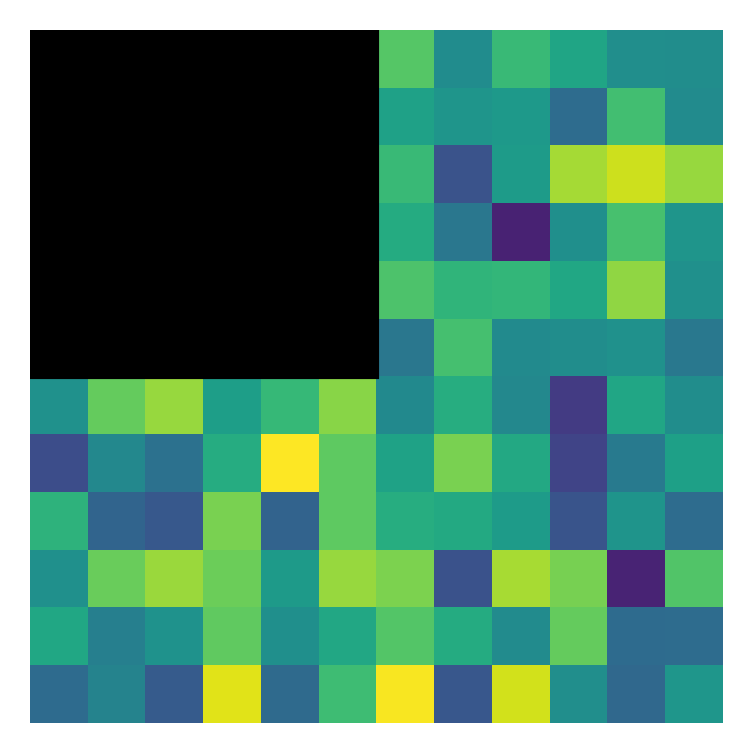}
    \caption{}
    \end{subfigure}
    \begin{subfigure}[b]{0.18\textwidth}
    \includegraphics[width=\textwidth]{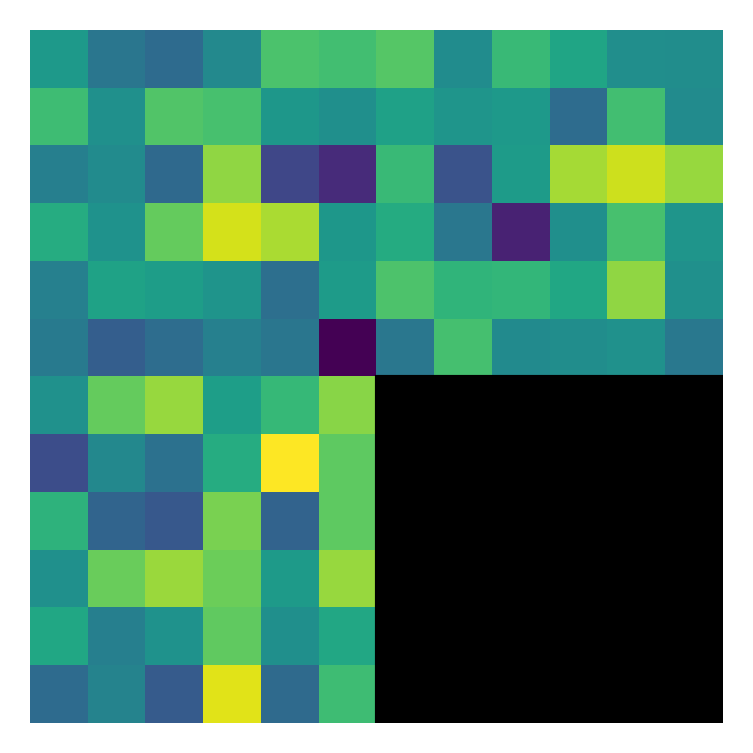}
    \caption{}
    \end{subfigure}
    \vspace{-1em}
    \caption{Illustration of our toy experimental setting: (a) a toy data sample; (b) covariance matrix from which (a) is sampled; (c) areas from which masks are sampled during train and test times; (d) the sample from (a) masked with a mask sampled from the training set; (e) the sample from (a) masked with a mask from the test set.}
\label{fig:description_toy_exp}
\end{figure}

We now illustrate the practical generalization capabilities of the proposed approach in such a setting.
We consider a setting where, during training, the operators $\left\{A_i\right\}_{i=1}^I$ are binary, square mask operators of fixed size, sampled at random from the red area denoted by $\texttt{A}$ in \autoref{fig:description_toy_exp} (c). Similarly, the test tasks consist of masking the bottom right corner (area denoted by \texttt{B}). We solve the inner problem approximately with 20 steps of Adam \citep{kingma2014adam} and the differentiation of the outer loss is computed via automatic differentiation.
After the training phase, we fine-tuned the network with the unsupervised inner loss on the test task. 

We show the weights learned by the network $\theta^*$ in \autoref{fig:results_finetuned}, on both a training task and after unsupervised fine-tuning on a test task.
We see a strong similarity between the learned patterns and Bayes' optimal estimator on both the training tasks and the unsupervised fine-tuning on test tasks.
Notice however that the weights learned on the training tasks match more closely to the analytical solution than the weights fine-tuned in an unsupervised fashion on the test task. We stress that in both cases, the learned solution does not match exactly the analytical solution, which we attribute to the stochastic nature of the learning procedure.
Fine-tuning the model on a test task with masks on areas of the image that were not probed during training converges to weights that also show important similarity with Bayes' optimal estimator. 
This experiment confirms that the proposed approach allows learning a prior in the case of a linear model and that the influence of this prior percolates at the level of the fine-tuning, allowing the model to generalize to tasks unseen during training.

\setlength{\tabcolsep}{1pt}
\begin{figure}
    \small
    \centering
    \begin{tabular}{c c c c}
        Learned weights (train) & Analytic solution (train) & Learned weights (test)  & Analytic solution (test)\\
        \includegraphics[width=0.24\textwidth]{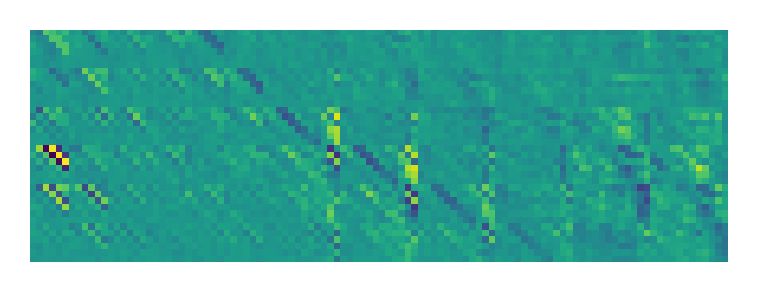}
            & \includegraphics[width=0.24\textwidth]{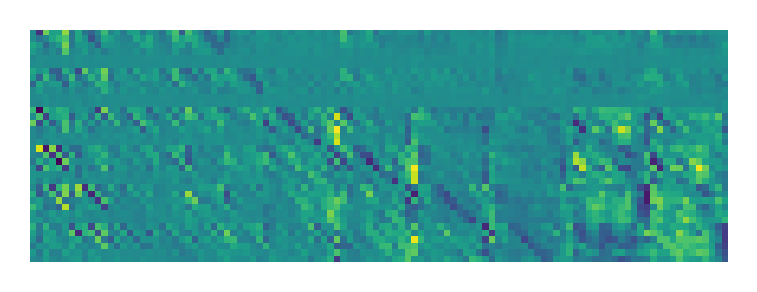}
            & \includegraphics[width=0.24\textwidth]{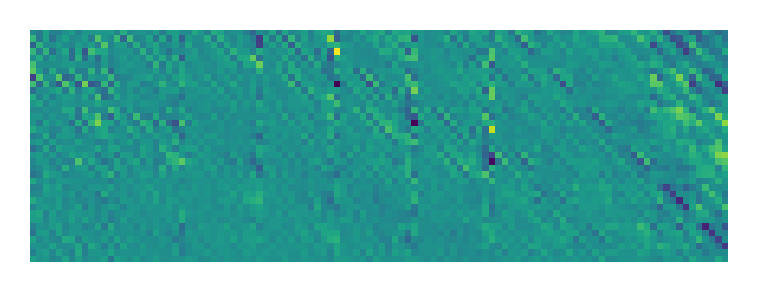}
            & \includegraphics[width=0.24\textwidth]{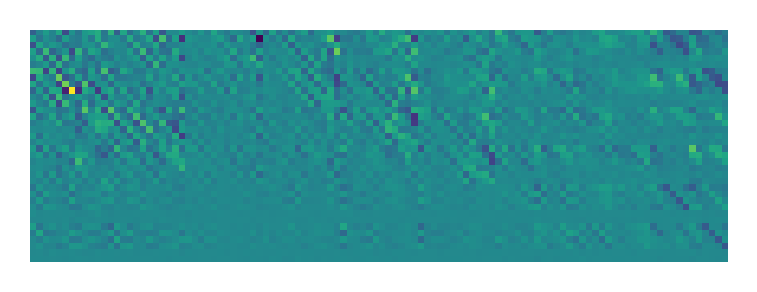}\\
        (a) & (b) & (c) & (d)\\
    \end{tabular}
    \caption{Learning $\widehat{x}(y, A_i)$ with a linear model for an inpainting task, in unsupervised and supervised settings. Each plot shows the matrix mapping between the observed data space $\Ima(A_i)$ and $\operatorname{Ker}(A_i)$, the analytic solution being given by \eqref{eq:bayes_closed_form}. (a) and (b) show learned weights and the analytic solution on training tasks. (c) and (d) show learned weights and the analytic solution on test tasks, with masks unseen during training and unsupervised fine-tuning loss.}
    \label{fig:results_finetuned}
\end{figure}

\paragraph{Extention to the nonlinear case}

In the case of nonlinear models $f_\theta$, the influence of the nullspace of the $A_i$s is not as clear. We can however derive the following proposition in the case of unsupervised inner losses.

\begin{restatable}{proposition}{ntk}
\label{prop:ntk}
Consider Problem~\eqref{eq:general_problem}. If the network $f_\theta$ is extremely overparametrized and $\Pi_{\Ker(A_i)}$ commutes with $J_{\theta^*}J_{\theta^*}^\top$, then, we have
\[
    f_\theta(y) \big|_{\ker(A_i)} = f_{\theta^*}(y) \big|_{\ker(A_i)}\enspace,
\]
when the inner loss is unsupervised.
\end{restatable}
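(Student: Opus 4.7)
The plan is to exploit the standard neural tangent kernel (NTK) linearization of $f_\theta$ around the meta-parameter $\theta^*$. In the extremely overparametrized regime, ``lazy training'' allows us to replace $f_\theta(y)$ by its first-order Taylor expansion $f_{\theta^*}(y) + J_{\theta^*}(y)(\theta-\theta^*)$ throughout fine-tuning, because the Jacobian $J_\theta(y)$ stays essentially constant and equal to $J_{\theta^*}(y)$ along the optimization trajectory. Under this linearization, the inner problem becomes a quadratic in the parameter displacement $\Delta\theta := \theta_i-\theta^*$, which makes the direction in which fine-tuning can move explicit.

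First, I would compute the gradient of the unsupervised inner objective at $\phi$ close to $\theta^*$. Writing $r_j(\phi) := A_i f_\phi(y_j^{(i)}) - y_j^{(i)}$, the gradient reads
\begin{equation*}
\nabla_\phi \mathcal{L}_{\text{inner}}(f_\phi,\gT_i,\gD_i^{\text{train}}) = \sum_j J_\phi(y_j^{(i)})^\top A_i^\top r_j(\phi) + \lambda(\phi-\theta^*)\,.
\end{equation*}
In the NTK regime $J_\phi \approx J_{\theta^*}$, so every gradient step (and therefore the solution of the inner problem initialized at $\theta^*$) produces a displacement $\Delta\theta$ that lies in the range of $J_{\theta^*}^\top A_i^\top$; that is, $\Delta\theta = J_{\theta^*}^\top A_i^\top \alpha$ for some vector $\alpha$ depending on the trajectory.

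Second, I would push this through the linearization to control $f_\theta - f_{\theta^*}$ on the kernel of $A_i$. Applying $J_{\theta^*}(y)$ to $\Delta\theta$ gives
\begin{equation*}
f_\theta(y) - f_{\theta^*}(y) \;\approx\; J_{\theta^*}(y)\Delta\theta \;=\; J_{\theta^*}(y) J_{\theta^*}^\top A_i^\top \alpha\,.
\end{equation*}
Projecting onto $\Ker(A_i)$ and using the commutation hypothesis $\Pi_{\Ker(A_i)}\, J_{\theta^*} J_{\theta^*}^\top = J_{\theta^*} J_{\theta^*}^\top\, \Pi_{\Ker(A_i)}$ together with the elementary identity $\Pi_{\Ker(A_i)} A_i^\top = 0$ (since $\Ima(A_i^\top) = \Ker(A_i)^\perp$) yields
\begin{equation*}
\Pi_{\Ker(A_i)}\bigl(f_\theta(y) - f_{\theta^*}(y)\bigr) = J_{\theta^*}(y) J_{\theta^*}^\top \Pi_{\Ker(A_i)} A_i^\top \alpha = 0\,,
\end{equation*}
which is exactly the claimed equality. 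An induction on the gradient step index, mirroring part (ii) of Theorem~\ref{thm:linear_model_grad}, extends this from the optimum to every iterate.

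The main obstacle is making the ``extremely overparametrized'' hypothesis rigorous: formally one needs either a width-going-to-infinity statement that controls the Jacobian drift along the full fine-tuning trajectory (so that the linearization error is uniformly negligible), or an explicit assumption that $f_\theta$ is affine in $\theta$ in a neighborhood of $\theta^*$. I would state the result in the strict linearization regime so that the commutation assumption on $J_{\theta^*}J_{\theta^*}^\top$ and $\Pi_{\Ker(A_i)}$ can be used algebraically, and point out that beyond the lazy regime, the conclusion only holds up to an error controlled by the Jacobian variation along the path $\{\theta^*+t\Delta\theta\}_{t\in[0,1]}$.
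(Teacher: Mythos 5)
Your proposal is correct and rests on the same two pillars as the paper's proof: the first-order (lazy/NTK) linearization of $f_\theta$ around $\theta^*$, and the combination of the commutation hypothesis with the identity $\Pi_{\Ker(A_i)}A_i^\top=0$ to annihilate the kernel component of the output update. The execution differs enough to be worth comparing. The paper starts from the optimality condition $J_\theta^\top A_i^\top(A_if_{\theta_i}(y_i)-y_i)+(\theta_i-\theta^*)=0$, applies the pseudo-inverse $(J_\theta^\top)^\dagger$ (which implicitly uses surjectivity of $J_\theta$, granted by overparametrization), treats the case $A_if_{\theta_i}(y_i)\neq y_i$, and concludes via the identity $L^\top=L^\top LL^\dagger$ after commuting $\Pi_{\Ker(A_i)}$ past $J_\theta J_\theta^\top$. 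You instead read off directly from the gradient of the unsupervised inner loss that the displacement $\theta_i-\theta^*$ lies in the range of $J_{\theta^*}^\top A_i^\top$ --- both at the minimizer and, by induction, along every gradient iterate initialized at $\theta^*$ --- and push this forward through $J_{\theta^*}$. This avoids the pseudo-inverse manipulation and the case split, and yields the iterate-wise statement (the analogue of \autoref{thm:linear_model_grad}(ii)) for free. One caveat you share with the paper: the displacement really lives in the span of $J_{\theta^*}(y_j)^\top A_i^\top$ over the training inputs $y_j$, so the commutation hypothesis must be understood as bearing on the cross-input tangent kernel $J_{\theta^*}(y)J_{\theta^*}(y_j)^\top$, not only on a single-input Gram matrix; neither write-up makes this explicit. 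Your closing remark on the need to control the Jacobian drift to make the lazy-training step rigorous is apt --- the paper likewise appeals to the overparametrized linearization without quantifying the error.
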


This result shows that in the unsupervised setting, when the model $f_{\theta_i}$ has a simple mapping from its parameter space to the image space, it only adapts its output to the task in $\Ker(A_i)^\perp$.
Intuitively, the model's inner loss remains 0 on $\operatorname{Ker}(A_i)$, and thus no gradient information is propagated.
Under the assumption of~\autoref{prop:ntk}, we recover the observation that was made in the linear case, \emph{i.e.} that the supervision at the outer level of the meta-learning approach is necessary in order to capture meaningful information in the kernel of the $A_i$.

However, the commutativity assumption is very restrictive and unlikely to be satisfied in practice.
Yet, we conjecture that in the highly overparametrized case, the result does hold.
This result however suggests that the relation between the kernel of $A_i$ and the neural tangent kernel $J_{\theta}J_{\theta}^\top$ (NTK; \citealt{jacot2018neural}) should be further explored in the light of~\citet{wang2022and}'s work.

\section{Imaging experiments}

In this section, we apply the proposed method to an unfolded architecture that is trained to solve \eqref{eq:general_problem} on different standard image restoration tasks in both supervised and unsupervised settings and investigate the generalization capabilities of the model to tasks unseen during training.
While the previous sections focused on noiseless inverse problems only, we here consider more general problems during training, including image denoising and pseudo-segmentation.

\subsection{Problem formulation}
\label{sect:tasks_train}

During training, we propose to solve Problem \eqref{eq:general_problem} for 4 different imaging tasks $\left\{\gT_{i}\right\}_{i=1}^{4}$.
The task $\gT_1$ is image denoising, \emph{i.e.} one wants to estimate $\overline{x}$ from $y=\overline{x}+\sigma e$ where $e$ is the realization of Gaussian random noise.
$\gT_2$ is total variation estimation \citep{condat2017discrete}, \emph{i.e.} one wants to estimate $\operatorname{prox}_{\text{TV}}(y)$ from $y$\footnote{For a convex lower-semicontinuous function $h$, the proximity operator of $h$ is defined as $\operatorname{prox}_{h}(y) = \underset{x}{\operatorname{argmin}}\,\,h(x)+\frac{1}{2}\|x-y\|_2^2$.}\!. We underline that this problem is of interest since it can be seen as a simplified segmentation task \citep{chan2006algorithms}.
$\gT_3$ is (noiseless) image deconvolution, \emph{i.e.} one wants to estimate $y=k*\overline{x}$ for $k$ some convolutional kernel and $*$ the usual convolution operation.
Eventually, $\gT_4$ is image inpainting, \emph{i.e.} one wants to estimate $\overline{x}$ from $y=M\odot x$ where $\odot$ denotes the elementwise (Hadamard) product and $M$ is a binary mask. 

The selection of these training tasks serves a dual purpose: first, to foster the acquisition of emergent priors from a broad and varied training set, and second, to ensure a minimal overlap in the kernels of the measurement operators as our analysis suggests.

We propose to apply our model to two tasks that were not seen during training, namely image super-resolution and MRI. For natural images, we consider the Set3C test dataset as commonly used in image restoration tasks \citep{hurault2021gradient}; for MRI, we use a fully sampled slice data from the validation set of fastMRI \citep{zbontar2018fastmri}.

\subsection{Architecture and training}

\textbf{PDNet architecture} Recent findings, as highlighted in \citep{yu2023emergence}, underscore the predominant role of architectural choice in the emergence of priors. In this work, we consider an unfolded Primal-Dual network (PDNet). PDNet takes its roots in aiming at solving the problem
\begin{equation}
   \underset{x}{\text{argmin}}\,\, \frac{1}{2}\|Ax-y\|_2^2+\lambda \|Wx\|_1.
\label{eq:min_primal_dual}
\end{equation}
This problem can be solved with a Primal-Dual algorithm \citep{chambolle2011first}, and each PDNet layer reads:
\begin{equation}
\begin{aligned}
x_{k+1} &= x_k - \tau A^\top(A(x_k-y))-\tau W_k^\top u_k \\
u_{k+1} &= \operatorname{prox}_{\gamma (\lambda_k \|\cdot\|_1)^*}(u+\gamma W_k(2x_{k+1}-x_k)),
\end{aligned}
\label{eq:pdnet_layer}
\end{equation}
where $W$ is a linear operator (typically a sparsifying transform \citep{daubechies2004iterative}) and $\lambda>0$ a regularization parameter.
One layer $f_{W_k, \lambda_k}$ of our network thus writes as the above iteration, and the full network writes:
\begin{equation}
f_{W, \lambda}(y) = f_{W_K, \lambda_K} \circ \cdots \circ f_{W_2, \lambda_2} \circ f_{W_1, \lambda_1}(y),
\end{equation}
where $\tau,\gamma>0$ are small enough stepsizes \citep{condat2013primal} and $(\cdot)^*$ denotes the convex conjugate. We stress that the weights and thresholding parameters vary at each layer; as a consequence, PDNet does not solve \eqref{eq:min_primal_dual}. It however offers a simple architecture taking into account the measurement operator $A_i$, improving the generalization capabilities of the network. Similar architectures are widely used in the imaging community \citep{adler2018learned, ramzi2020xpdnet, ramzi2022nc}. More generally, this architecture has connections with several approaches and architectures from the literature: variational approaches when weights are tied between layers \citep{le2023pnn}, dictionary-learning based architectures \citep{malezieux2021understanding} and residual architectures \citep{sander2022residual}. 
\begin{figure}
\footnotesize
    \centering
\includegraphics[width=\textwidth]{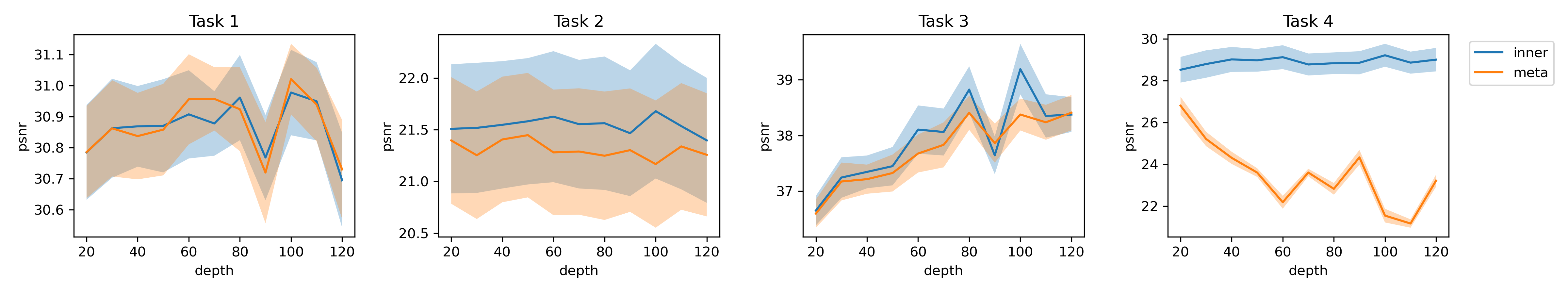}\\[-1em]
\includegraphics[width=\textwidth]{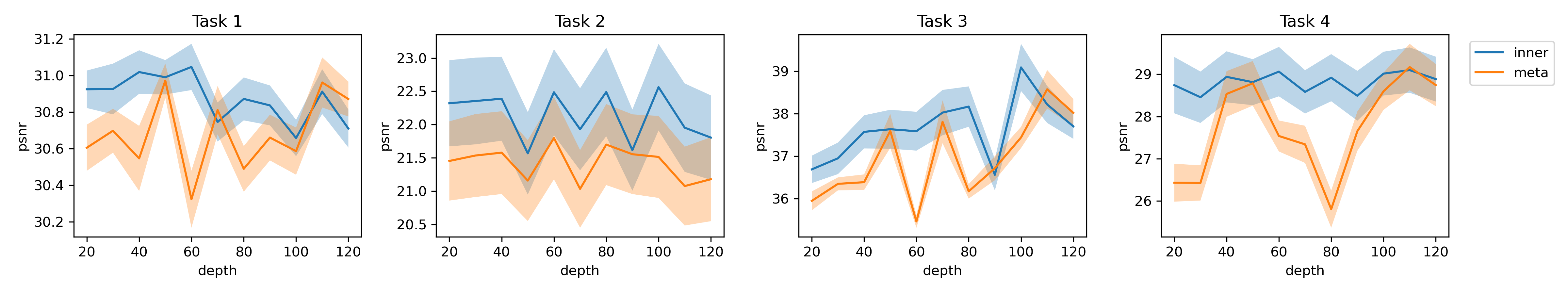}
    \vskip-.7em
    \caption{Mean reconstruction PSNR on the Set3C test set on tasks seen during training. The shaded area represents the empirical standard deviation. Top row: models trained with supervised inner and outer losses (supervised setting). Bottom row: models trained with supervised outer loss and unsupervised inner loss (unsupervised setting).}
    \label{fig:training_tasks}
\end{figure}

\textbf{Training details} For every $i$, the training dataset $\mathcal{D}_{i}^{\text{train}}$ consists in the BSD500 training and test set combined \citep{amfm_pami2011}; the test dataset $\mathcal{D}_{i}^{\text{test}}$ consists in the validation set of BSD500. The fine-tuning dataset depends on the task of interest and are detailed in \autoref{sec:unseen}. We train various versions of PDNet with depths $K$ ranging in $\{20, 30, \hdots, 120\}$. Each $(W_k)_{1 \leq k \leq K}$ is implemented as a convolutional filter with size $3\times 3$, with 1 (resp. 40) input (resp. output) channels\footnote{As a consequence, the PDNet architecture contains 361 learnable parameters per layer \eqref{eq:pdnet_layer}.}\!. During training, we minimize the loss \eqref{eq:general_problem} where the training tasks $\left\{\mathcal{T}_i\right\}_{i=1}^4$ are described in \autoref{sect:tasks_train}. The solution to the inner problem in \eqref{eq:general_problem} is now approximated with only 1 step of Adam. This number is chosen to reduce the computational burden for deeper networks.
We emphasize that the MaML approach is memory intensive in the imaging context and efficient training optimization methods may be required when applying the proposed approach to larger models, such as UNets.

We display in \autoref{fig:training_tasks} the performance of the meta-model and task-specific models on the test set for tasks seen during training. In both supervised and unsupervised setups, inner models perform better than meta-models for all tasks, except for the denoising task (task 1) in the supervised setting. For the deblurring and inpainting tasks (tasks 3 and 4), the gap between the meta and inner model tends to increase with the depth of the model. Visual results are provided in \autoref{fig:images_training_tasks}. The performance of the inner models is similar in both supervised and unsupervised settings, with the supervised models achieving slightly better results.

\subsection{Generalization to unseen tasks}
\label{sec:unseen}

Next, we test our trained models on two tasks that were not seen during training, namely natural image super-resolution (SR) and MRI imaging. In this setting, we fine-tune the model as per the inner-problem formulation in \eqref{eq:general_problem}, with either supervised or unsupervised losses. We compare the reconstructions obtained with the proposed methods with traditional handcrafted prior-based methods (namely wavelet and TGV \citep{bredies2010total} priors) as well as with the state-of-the-art DPIR algorithm \citep{zhang2021plug}, which relies on implicit deep denoising prior.

\textbf{Super-resolution} The noiseless SR problem \citep{li2023ntire} consists in recovering $x$ from $y = Ax$, where $A$ is a decimation operator. We underline that this problem can be seen as a special case of inpainting, but where the mask shows a periodic pattern. We fine-tune the meta PDNet model on the same dataset used at the inner level of the MaML task, \emph{i.e.} the 400 natural images from BSD500 and evaluate the test task on the Set3C dataset.

Visual results are provided in \autoref{fig:sr_images}, and numerical evaluation is reported in \autoref{tab:metrics}.
We notice that the proposed method performs on par with the state-of-the-art (unsupervised) DPIR algorithm. In the unsupervised setting, notice that the model is able to learn an effective model on this unseen task. Training dynamics are available in \autoref{fig:SR_MRI}.

\textbf{Magnetic Resonance Imaging} In magnetic resonance imaging (MRI), the sensing device acquires a noisy and undersampled version of the Fourier coefficients of the image of interest. Following the fastMRI approach~\citep{zbontar2018fastmri}, we consider a discrete version of the problem where the measurement equation writes $y=MFx$, where $M$ is a binary undersampling mask and $F$ the (discrete) fast Fourier transform.
Unlike single-image super-resolution (SR), our model, trained on natural image restoration, faces additional challenges due to substantial distribution shifts not only in the measurement operator, which operates on $\mathbb{C}$, but also in the image domain. The meta PDNet model is fine-tuned on 10 slices from the fastMRI validation dataset, on the same MRI mask that will be used at test time. At test time, the model is evaluated on 10 other slices from the fastMRI validation set, different from those used during training.

\begin{table}[h]
\centering
\footnotesize
\caption{\looseness=-1
Reconstruction metrics for different methods on the two problems studied in this paper. PDNet refers to the architecture trained on the testing task, while the MaML (supervised and unsupervised) versions are fine-tuned on the task of interest with only 50 steps of Adam.}
\label{tab:metrics}
\begin{tabular}{@{\hskip 1pt}l @{\hskip 15pt} c @{\hskip 7pt} c @{\hskip 7pt} c @{\hskip 7pt} c @{\hskip 7pt} c @{\hskip 3pt} || @{\hskip3pt} c}
\toprule
 & wavelets & TGV & DPIR & 
 \begin{tabular}[c]{@{}c@{}}PDNet-MaML\\ sup., 50 steps \end{tabular}
  & \begin{tabular}[c]{@{}c@{}}PDNet-MaML\\ unsup., 50 steps \end{tabular}
  & PDNet\\ \hline 
SR $\times$2 & 23.84 $\pm$ 1.22 & 25.52 $\pm$ 1.33 & \underline{27.34 $\pm$ 0.67}  & 26.73 $\pm$ 0.78 & \bf{27.36 $\pm$ 0.79} & 28.20 $\pm$  0.76 \\
MRI $\times$4 & 28.71 $\pm$ 2.05 & 28.69 $\pm$ 2.10 & \bf{29.12 $\pm$ 2.18}  & \underline{29.04 $\pm$ 1.89} & 27.64 $\pm$ 1.97 & 30.04 $\pm$ 2.08  \\
\bottomrule
\end{tabular}
\end{table}

We provide results in \autoref{tab:metrics} for simulations with an acceleration factor 4. While the supervised MaML approach performs on par with DPIR, its unsupervised version fails to learn meaningful results.
This shows the limit of the proposed method; we suspect that the poor performance of the unsupervised fine-tuning is more a consequence of the distribution shift between the measurement operators than the distribution shift between the sought images. Training dynamics are available in \autoref{fig:SR_MRI} and visual results are reported in \autoref{fig:mri_images}.

\setlength{\tabcolsep}{1pt}
\begin{figure}
\footnotesize
    \centering
    \begin{tabular}{c c c c c}
  \includegraphics[width=0.16\textwidth]{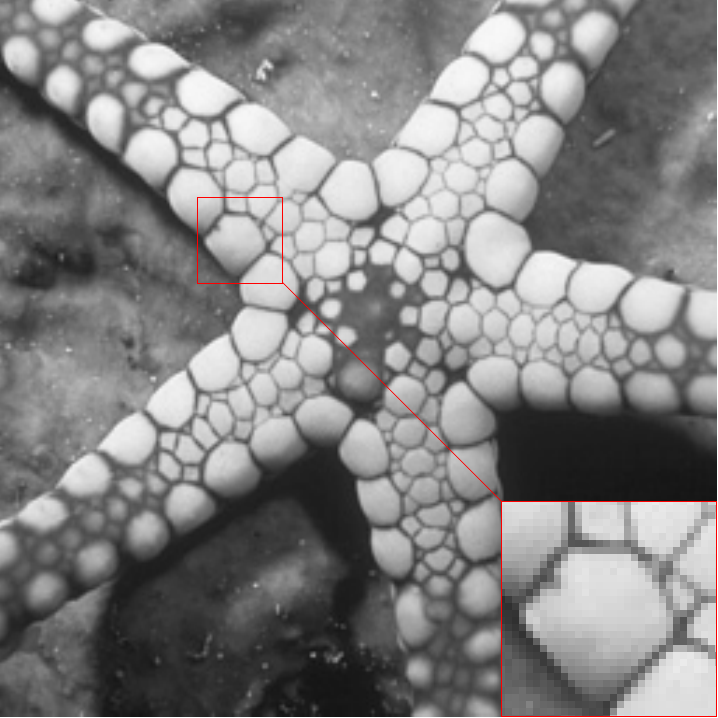} & \includegraphics[width=0.16\textwidth]{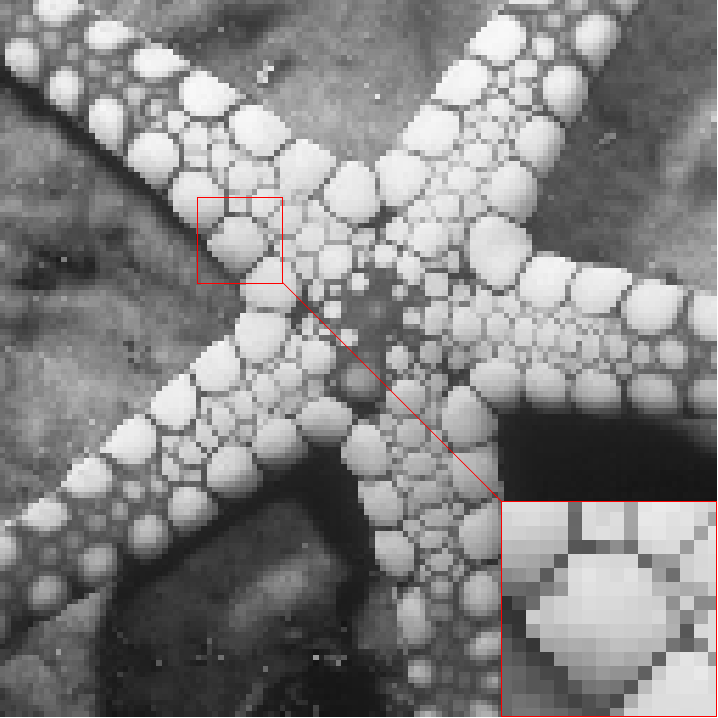} &  \includegraphics[width=0.16\textwidth]{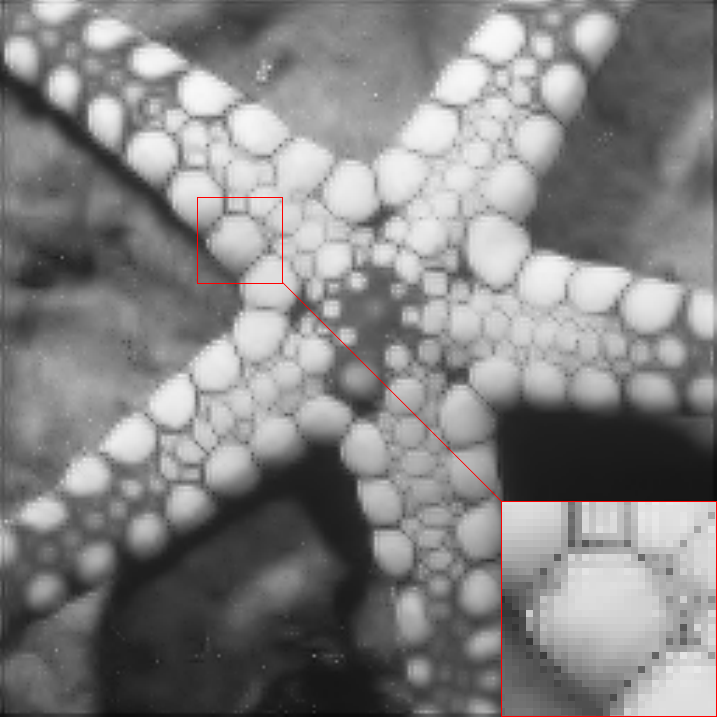} & \includegraphics[width=0.16\textwidth]{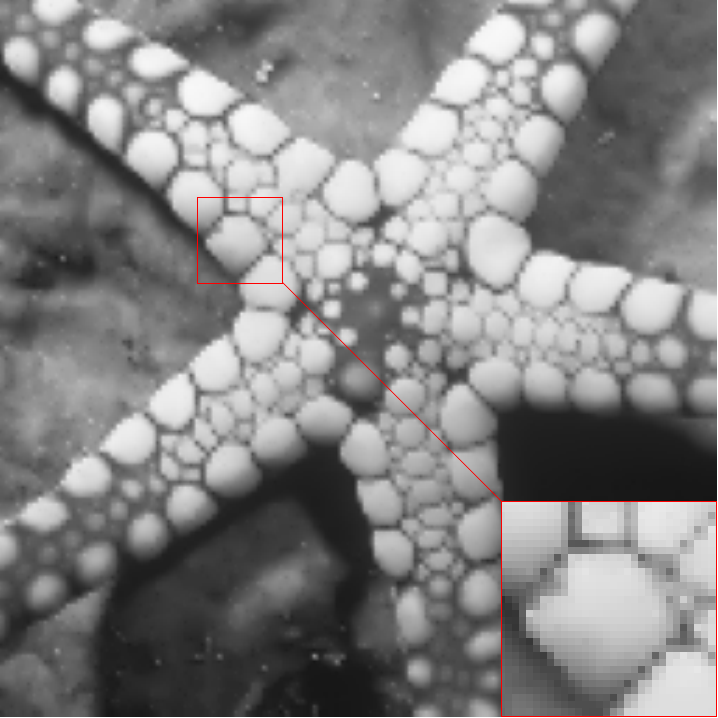} &  \includegraphics[width=0.16\textwidth]{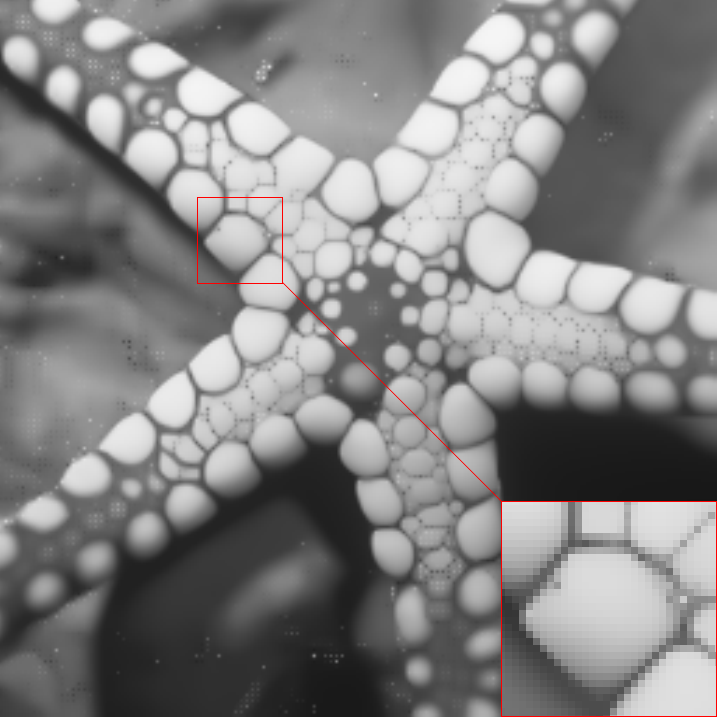} \\
 groundtruth & downsampled & wavelets & TGV & DPIR  \\ 
  & PSNR=23.68 & PSNR=25.58 & PSNR=27.40 & PSNR=27.95  \\
  \includegraphics[width=0.16\textwidth]{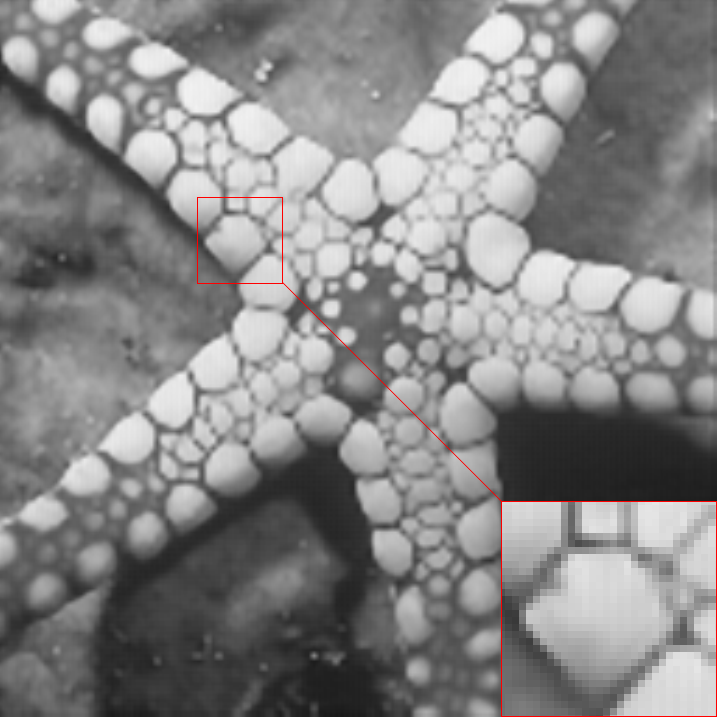} &  \includegraphics[width=0.16\textwidth]{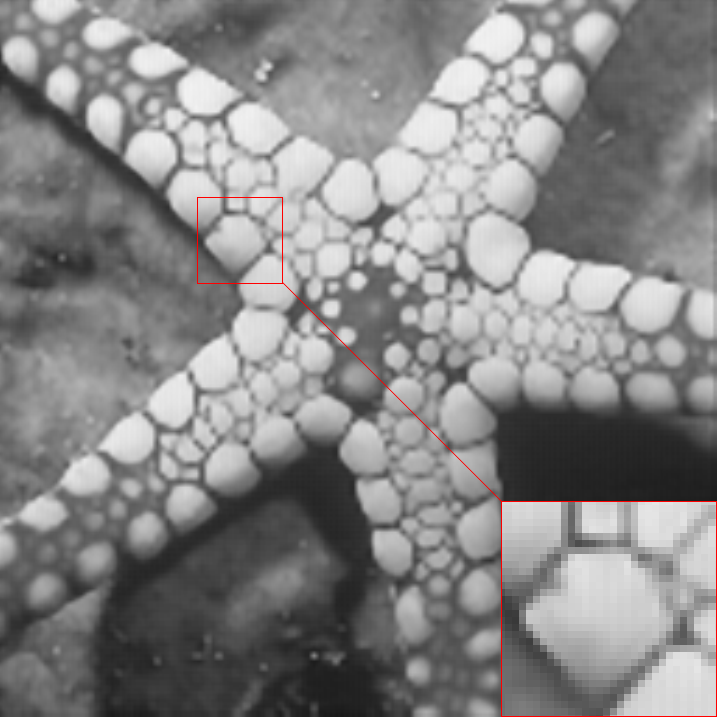} &  \includegraphics[width=0.16\textwidth]{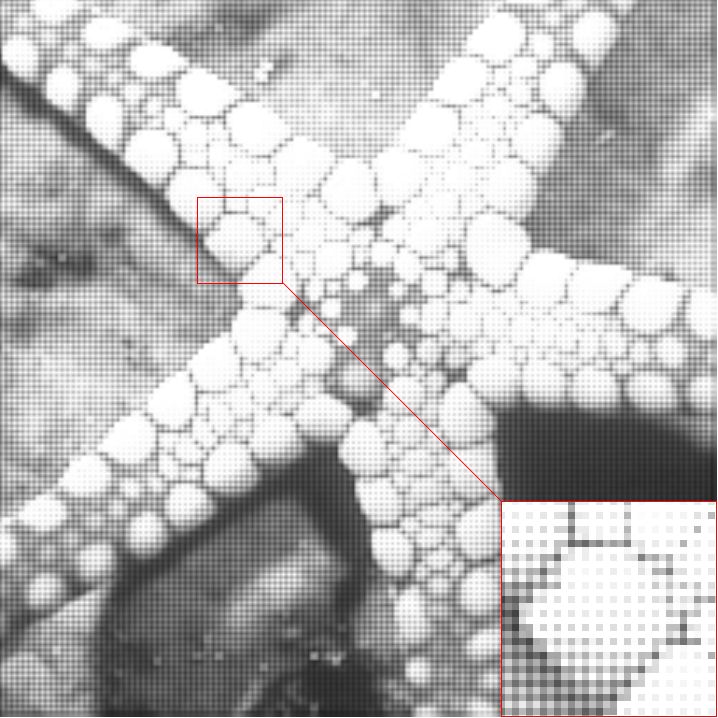} &  \includegraphics[width=0.16\textwidth]{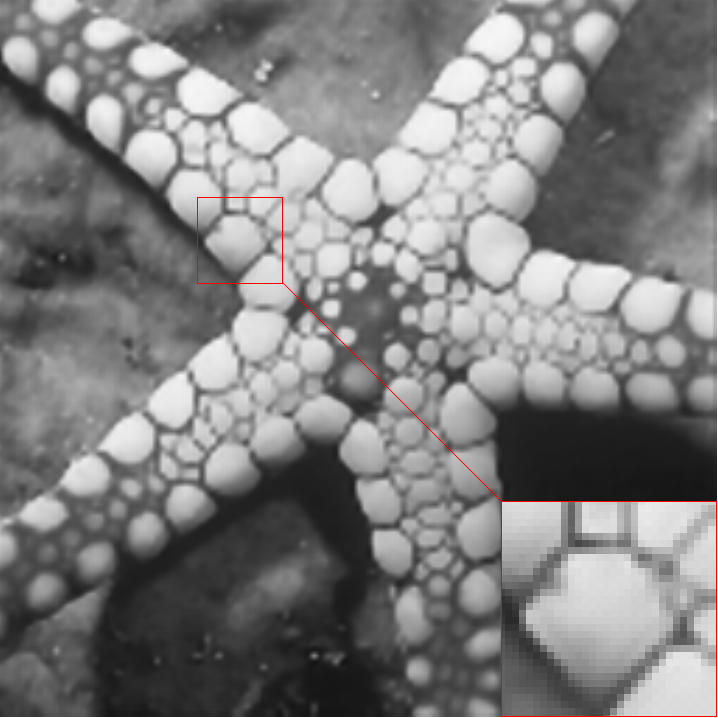} & \includegraphics[width=0.16\textwidth]{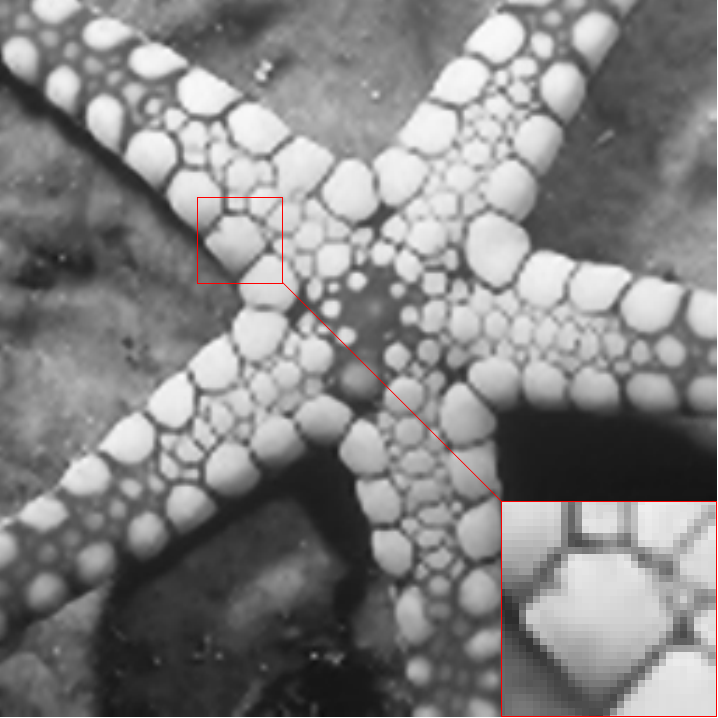} \\
 (a) PSNR=26.8 & (b) PSNR=27.8 & (c) PSNR=10.9 & (d) PSNR=28.2 & (e) PSNR=29.26  \\
\vspace{-1em}
\end{tabular}
\caption{Results on the SR test set. (a) Result after 1 step of supervised training. (b) Result after 20 steps of supervised training. (c) Result after 1 step of unsupervised training. (d) Result after 20 steps of unsupervised training. (e) PDNet trained with random initialization for 10k steps.}
\label{fig:sr_images}
\end{figure}

\section{Conclusion}

In this paper, we have introduced a meta-learning approach designed for solving inverse imaging problems.
Our approach harnesses the versatility of meta-learning, enabling the simultaneous leveraging of multiple tasks.
In particular, each fine-tuning task benefits from information aggregated across the diverse set of inverse problems encountered during training, yielding models easier to adapt to novel tasks.
A specific feature of our approach is that the fine-tuning step can be performed in an unsupervised way, enabling to solve unseen problems without requiring costly ground truth images.
Our methodology relies on an unrolled primal-dual network, showcasing the efficiency of the meta-learning paradigm for fine-tuning neural networks.
This efficiency holds true even when faced with test tasks that significantly diverge from the training dataset.
Yet, while it yields promising results with unsupervised learning in settings akin to the training data distribution, our proposed approach encounters notable challenges when extending its applicability to substantially out-of-distribution problems, as often encountered in the domain of MRI applications.

\bibliography{reference}
\bibliographystyle{reference}

\appendix

\counterwithin{figure}{section}
\def\thefigure{\thesection.\arabic{figure}}

\section{Technical results}

\subsection{Characterizing the meta-prior gradients}
\label{sec:gradient_linear}

\linearModelGrad*

\begin{proof}[Proof of \autoref{thm:linear_model_grad}]
(i) We consider the task of fine-tuning the model $f_\theta(y) = \theta y$ on a new test task $\mathcal{T}_J$ in an unsupervised setting starting from the meta-model $\theta^*$. 
The gradient of the inner loss writes
\begin{equation}
\nabla_{\phi} g_J(x, \phi) = A_J^\top A_J\phi A_Jxx^\top A_J^\top - 2A_J^\top A_Jxx^\top A_J^\top +\phi-\theta^*.
\label{eq:nabla_gj}
\end{equation}
Noticing that the two lefmost terms in \eqref{eq:nabla_gj} cancel on $\operatorname{Ker}(A_J)$, we deduce from the optimality condition $\nabla_{\phi}g_J(x, \theta_J) = 0$ that $\Pi_{\operatorname{Ker}(A_J)}(\theta_J-\theta^*) = 0$, hence the result. 

(ii) Furthermore, if the minimization of the fine-tuning loss $g_J$ is performed with gradient descent with stepsize $\gamma$ and initialized at $\theta^*$, the iterates $\left(\theta_J^{(t)}\right)_{t\in \mathbb{N}}$ satisfy
\begin{equation}
    \theta_J^{(t+1)} = \theta_J^{(t)} - \gamma A_J^\top A_J\theta_J^{(t)} A_Jxx^\top A_J^\top - \gamma 2A_J^\top A_Jxx^\top A_J^\top -\gamma(\theta_J^{(t)}-\theta^*).
\end{equation}
Projecting on $\operatorname{Ker}(A_J)$ yields
\begin{equation}
    \Pi_{\operatorname{Ker}(A_J)}\left(\theta_J^{(t+1)}\right) = \Pi_{\operatorname{Ker}(A_J)}\left((1-\gamma)\theta_J^{(t)} +\gamma\theta^*\right).
\end{equation}
At initialization, $\theta_J^{(0)} = \theta^*$; we conclude by recursion that for all $t\in\mathbb{N}$,
\begin{equation}
    \Pi_{\operatorname{Ker}(A_J)}\left(\theta_J^{(t)}\right) = \Pi_{\operatorname{Ker}(A_J)}\left(\theta^*\right).
\end{equation}

The same results can be obtained in the supervised setting. In this case, the inner gradient writes
\begin{equation}
\nabla_{\phi} g_J(x, \phi) = \phi A_Jxx^\top A_J^\top - xx^\top A_J^\top +\phi-\theta^*,
\end{equation}
and the results follow from similar arguments.

(iii) Now, let us denote, for all $1\leq i \leq I$, $g_i\colon \phi \mapsto \frac{1}{2}\|y_i-A_i\phi y_i\|_2^2+\frac{1}{2}\|\phi-\theta\|_2^2$.
One has
\begin{equation}
    \nabla g_i(\phi) = A_i^\top A_i\phi y_iy_i^\top - 2A_i^\top y_iy_i^\top +\lambda(\phi-\theta).
\end{equation}

The optimality condition of $\theta_i$ thus writes
\begin{equation}
\begin{aligned}
A_i^\top A_i\theta_i y_iy_i^\top  + \lambda\theta_i &= \lambda\theta + 2A_i^\top y_iy_i^\top \\
\end{aligned}
\end{equation}

This equation is a special form of a linear system (Stein's equation). In our case, we know that such a solution exists since it is the solution to a strongly convex minimization problem.
However, the closed-form solution is difficult to decipher.
The problematic term in this equation is $y_i y_i^\top$, which enforces rank restrictions on the solution.
Nevertheless, the solution of this linear system is linear in $\theta$ and thus
\begin{equation}
    \theta_i = K_i \theta + c_i\enspace.
\end{equation}
Note that if we neglect the term $y_iy_i^\top$ in the system of equation --or equivalently, replace it with the identity-- we get that $K_i=\left(\lambda\operatorname{Id}+ A_i^\top A_i\right)^{-1}$. It is to be noted that these matrices are full rank as soon as $\lambda >0$. Problem (\ref{eq:general_problem}) can thus be rewritten as
\begin{equation}
   \theta^* = \underset{\theta}{\arg\min} \,\, \frac{1}{2I}\sum_{i=1}^I\|\widetilde{x}_i-K_i\theta_i y_i\|_2^2,
\end{equation}
the optimality condition of which writes

\begin{equation}
   \sum_{i} K_i^\top K_i \theta^* y_iy_i^\top - K_i^\top \widetilde{x}_i y_i^\top = 0.
\end{equation}

Assume now that $y_i$ belongs to rank of $A_i$, \emph{i.e.} that there exists $x_i$ s.t. $y_i = A_ix_i$; we have
\begin{equation}
   \nabla \mathcal{L}_{\theta}(y, x) =  \sum_{i} K_i^\top K_i \theta A_ix_ix_i^\top A_i^\top - K_i^\top \widetilde{x}_i x_i^\top A_i^\top
\label{eq:grad_outer_proof}
\end{equation}

Assume now that $\bigcap_i \Ker(A_i) = \{0\}$ and, without loss of generality,  that $\operatorname{Ker}(A_i)\bigcap \Ker(A_j) = \{0\}$ for $i\neq j$. Assume eventually that for every $i$, $\sum_j A_ix_j^{(i)}x_j^{(i)\top} A_i^\top$ is full rank. 
Then for every $i$, the optimality condition yields a system of $n\times m_i$ unknowns with $n\times m_i$ equations. Summing over the range of each $A_i$, we obtain a system of $n\times m$ equations with $n\times m$ unknowns the solution of which has a unique solution.

\end{proof}

\subsection{Linear Inverse Problem's Bayes Predictor}
\label{sub:proof_bayes}

We now give the proof of \autoref{prop:bayes_linear}:

\bayesEstimator*

\begin{proof}[Proof of \autoref{prop:bayes_linear}]
Without loss of generality, we can decompose $x\sim \mathcal{N}(\mu, \Sigma)$, $\mu$ and $\Sigma$ on $\Ima(A^\top)\oplus \Ker(A)$ as follows:
\begin{equation}
x = \begin{pmatrix}
    x_{\Ima(A_i^\top)} \\
    x_{\Ker(A_i)}
\end{pmatrix}, \quad \mu = \begin{pmatrix}
    \mu_{\Ima(A_i^\top)} \\
    \mu_{\Ker(A_i)}
\end{pmatrix},\quad
    \Sigma = \begin{pmatrix}
        \Sigma_{\Ima(A_i^\top)} &
        \Sigma_{\Ima(A_i^\top), \Ker(A_i)} \\
        \Sigma_{\Ker(A_i), \Ima(A_i^\top)} &
        \Sigma_{\Ker(A_i)}
    \end{pmatrix} \enspace.
\end{equation}
Let us first consider the component on $\Ima(A_i^\top)$. As $A_i^\dagger$ is a deterministic function of $y$ which is injective, we have
\begin{equation}
    \widehat{\theta}(y, A_i)\big|_{\Ima(A_i^\top)} = \E \Big[ x_{\Ima(A_i^\top)}\big | y\Big] = \E \Big[x_{\Ima(A_i^\top)} \big | A_i^\dagger y\Big]\enspace.
\end{equation}
By definition of the Moore-Penrose pseudo-inverse, we have $x_{\Ima(A_i^\top)} = (A_i^\dagger  y)\big|_{\Ima(A_i^\top)}$, and therefore the first part of the result.

We now turn to the second component of the Bayes' estimator. First, let us consider the conditional expectation of $x_{\Ker(A_i)}$ given $x_{\Ima(A_i^\top)}$. Simple computations, similar to the ones from~\citet[Proposition 2.1]{le2020neumiss}, show that
\begin{equation}
    \label{eq:ker_from_im}
    \mathbb{E}[x_{\Ker(A_i)}|x_{\Ima(A_i^\top)}] = \mu_{\Ker(A_i)} + \Sigma_{\Ker(A_i), \Ima(A_i^\top)}\left(\Sigma_{\Ima(A_i^\top)}\right)^{-1}(x_{\Ima(A_i^\top)} - \mu_{\Ima(A_i^\top)}) \enspace.
\end{equation}
Using again that $A_i^\dagger$ is a deterministic and injective function of $y$, we have
\begin{equation}
    \widehat{\theta}(y, A_i)\big|_{\Ker(A_i)} =\E \Big[x_{\Ker(A_i)} \big | A_i^\dagger y\Big]
    = \E \Big[x_{\Ker(A_i)} \big | x_{\Ima(A_i^\top)}\Big]\enspace.
\end{equation}
Replacing $x_{\Ima(A_i^\top)}$ with $A_i^\dagger y$ in \eqref{eq:ker_from_im} gives the expression of the Bayes' estimator.
\end{proof}

\subsection{Link between linear and non-linear models}

\ntk*

\begin{proof}[Proof of \autoref{prop:ntk}]
The optimality condition of the inner problem implies
\begin{equation}
J_{\theta}^{\top} A_i^\top(A_i f_{\theta_i}(y_i)-y_i) + (\theta_i-\theta^*) = 0,
\end{equation}
which implies
\begin{equation}
A_i^\top(A_i f_{\theta_i}(y_i)-y_i) + \left(J_{\theta}^{\top}\right)^{\dagger}(\theta_i-\theta^*) = 0.
\label{eq:pinv_ftheta}
\end{equation}

Now, assume that $A_if_{\theta_i}(y_i) \neq y_i$; then the left hand-side belongs to $\Ima(A_i^\top) = \Ker(A_i)^{\perp}$, hence $\left(J_{\theta}^{\top}\right)^{\dagger}(\theta_i-\theta^*)\big|_{\Ker(A_i)} = 0$.
Multiplying by $J_{\theta}J_{\theta}^\top$ and assuming that $J_\theta J_\theta^\top$ commutes with $\Pi_{\operatorname{Ker}(A_i)}$, we have that $J_{\theta}J_{\theta}^\top\left(J_{\theta}^{\top}\right)^{\dagger}(\theta_i-\theta^*)\big|_{\Ker(A_i)} = 0$; eventually, since for any linear operator $L$, one has $L^\top = L^\top LL^\dagger$, we conclude that
\begin{equation}
J_{\theta}(\theta_i-\theta^*)\big|_{\Ker(A_i)} = 0
\end{equation}

A first-order Taylor expansion in $\theta_i = \theta^*+\Delta \theta$ gives
\begin{equation}
f_{\theta_i}(y) = f_{\theta}(y) + J_\theta \Delta \theta + o(\|\Delta \theta\|)
\end{equation}
where $J_\theta \Delta \theta \in \Ker(A_i)^{\perp}$. It can be shown that, in the overparmetrized regime, this linearization holds in good approximation \citep{chizat1812note}. Thus, we deduce that
\begin{equation}
    f_{\theta_i}(y)\big|_{\Ker(A_i)} = f_{\theta^*}(y)\big|_{\Ker(A_i)}.
\end{equation}

\end{proof}

\section{Further experimental results}
\label{app:experimental}

We show in \autoref{fig:images_training_tasks} the results of the meta and inner models for an inpainting (training) task $\mathcal{T}_4$ on the Set3C test set. These results are in line with those shown in \autoref{fig:training_tasks}: the meta-model in the fully supervised setting performs less well than the meta-model in the unsupervised setting, which is approximately on par with the inner model. The supervised inner model performs better than its unsupervised counterpart.

\setlength{\tabcolsep}{1pt}
\begin{figure}
\small
    \centering
    \begin{tabular}{c c c c c c}
    & \multicolumn{2}{c}{Supervised} & \multicolumn{2}{c}{Unsupervised} & \\
        Observed & Inner & Meta & Inner  & Meta & Ground truth\\
 \includegraphics[width=0.16\textwidth]{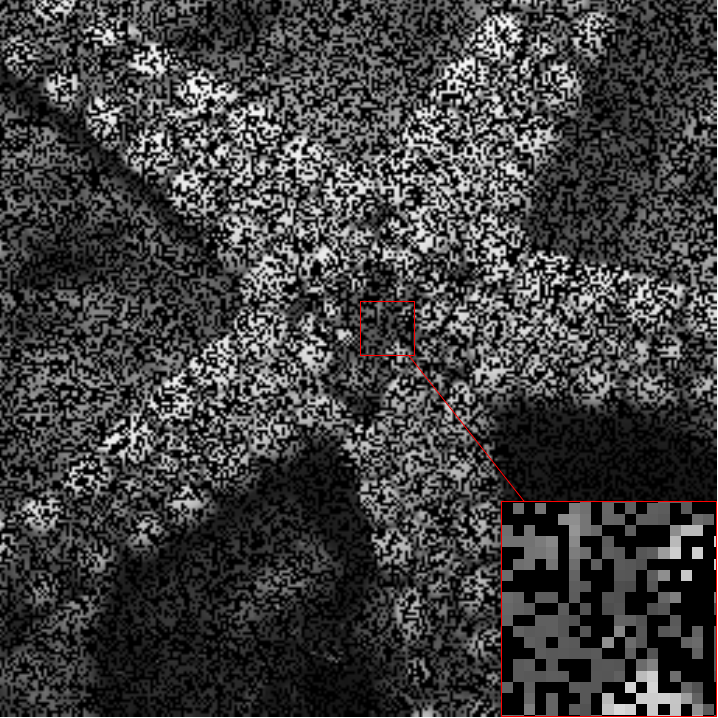} &  \includegraphics[width=0.16\textwidth]{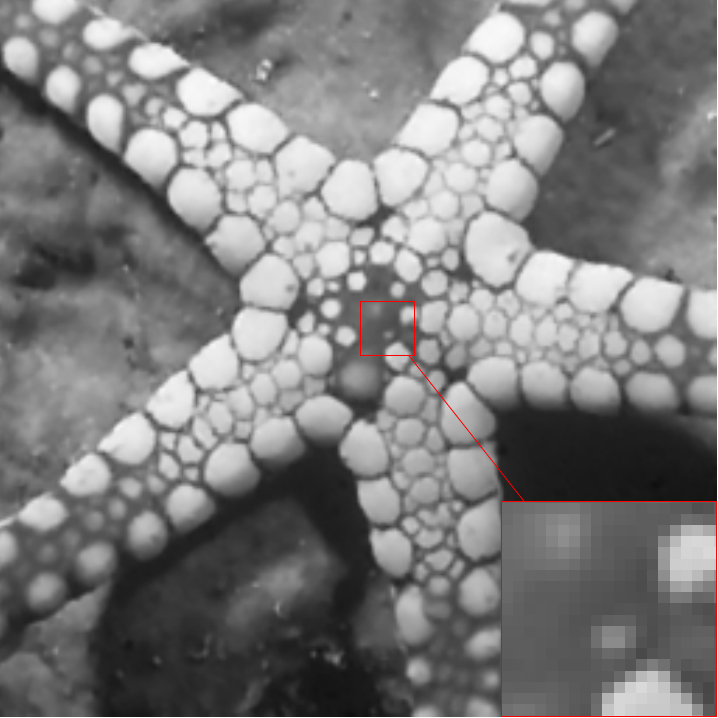} &  \includegraphics[width=0.16\textwidth]{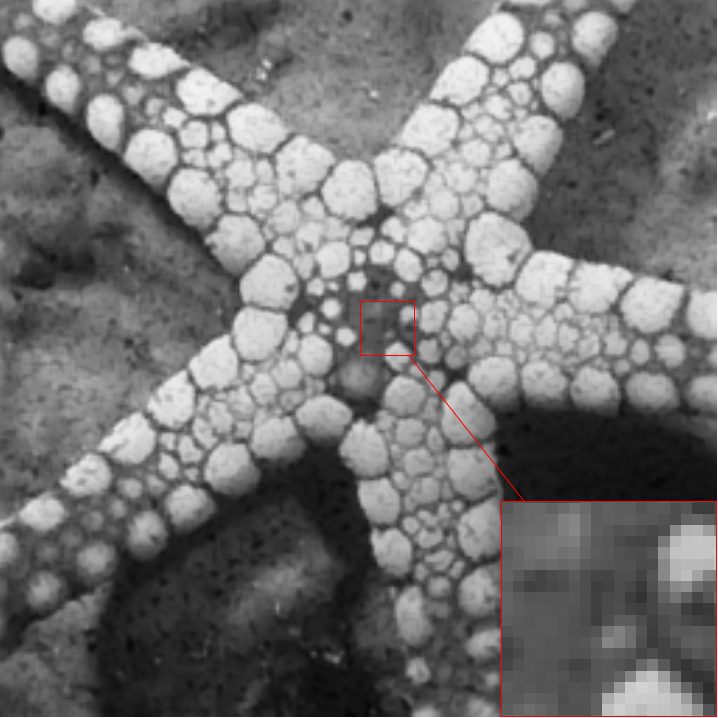} &  \includegraphics[width=0.16\textwidth]{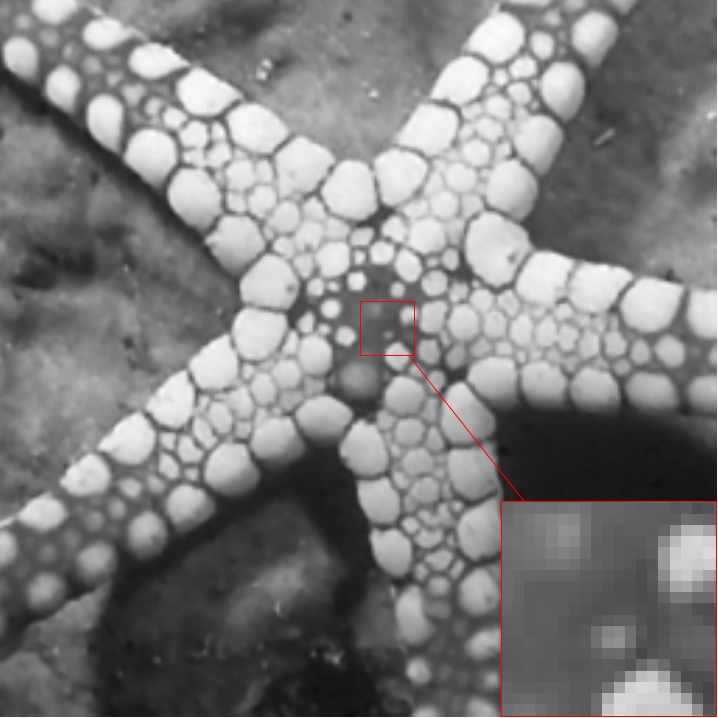} &  \includegraphics[width=0.16\textwidth]{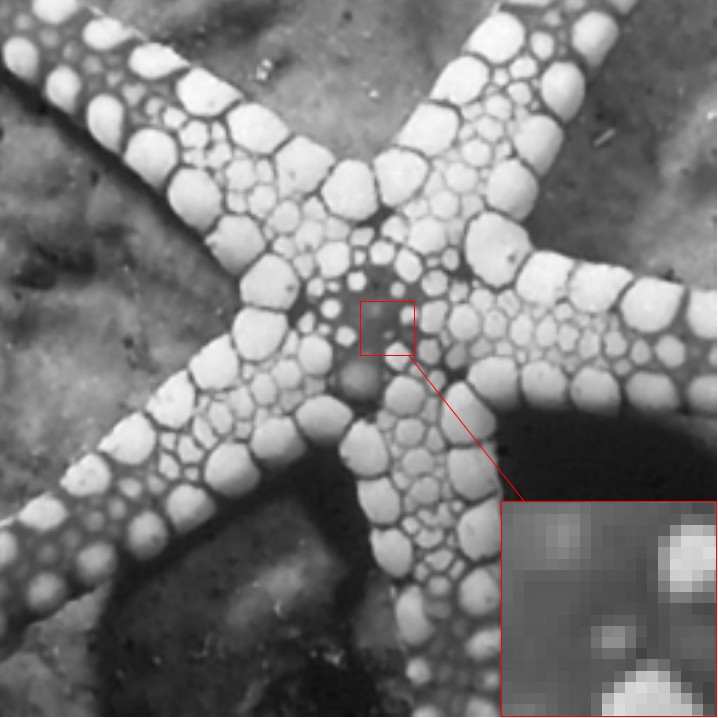} &  \includegraphics[width=0.16\textwidth]{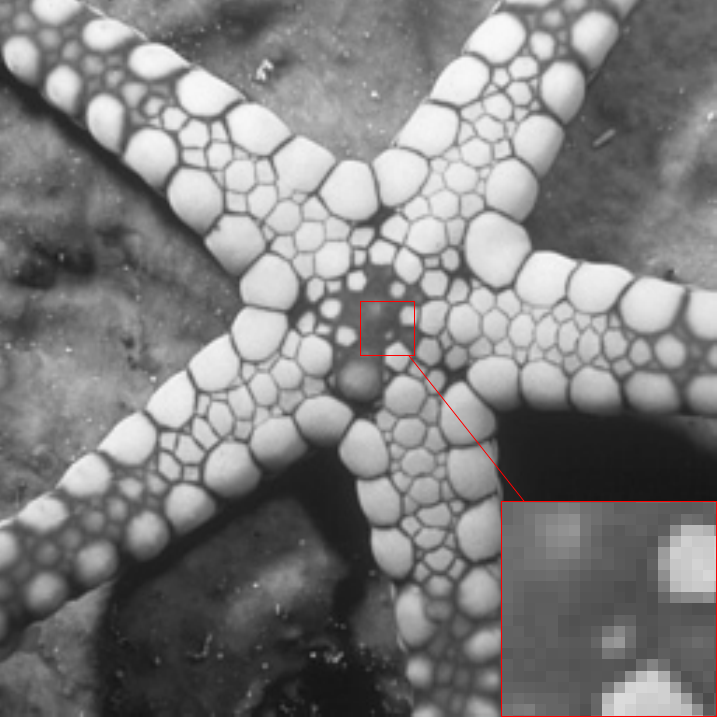}
\end{tabular}
\caption{Results on the validation set on an inpainting operator seen during training.}
\label{fig:images_training_tasks}
\end{figure}

\autoref{fig:SR_MRI} gives the evolution of the reconstruction PSNR as a function of the number of fine-tuning steps and depth of the model. In the supervised setting, we notice a significant improvement in the reconstruction quality after a single fine-tuning step compared to the results obtained with a network initialized at random.
This shows that even in the supervised setup, the benefit of the meta-model is that it can be adapted to an unseen task quicker than an untrained network, by avoiding retraining a network from scratch.
Furthermore, the results of the meta-model show greater volatility but quickly improve as the number of fine-tuning steps increases. We observe a different behaviour in the case of the MRI problem. In the supervised setting, few fine-tuning steps are sufficient to improve the reconstruction quality significantly. Yet, we notice that fine-tuning the model in a purely unsupervised fashion does not yield any noticeable improvement over an untrained network.

Visual results for MRI reconstructions are provided in \autoref{fig:mri_images}. We notice that while the meta-models do not provide improvements in PSNR metric over the observed data, it does slightly reduce the blurring in the image. 
However, after 1 step of fine-tuning, we notice a reduction in the Fourier artifacts present in the reconstruction, which is further improved after 20 steps. 
We did not witness any significant improvement between the reconstruction from the meta-model and the fine-tuned model in the unsupervised setting. 
This experiment suggests that the meta-model does provide a trained state $\theta^*$ that contains valuable prior information.

\begin{figure}
    \centering
    \includegraphics[width=0.99\textwidth]{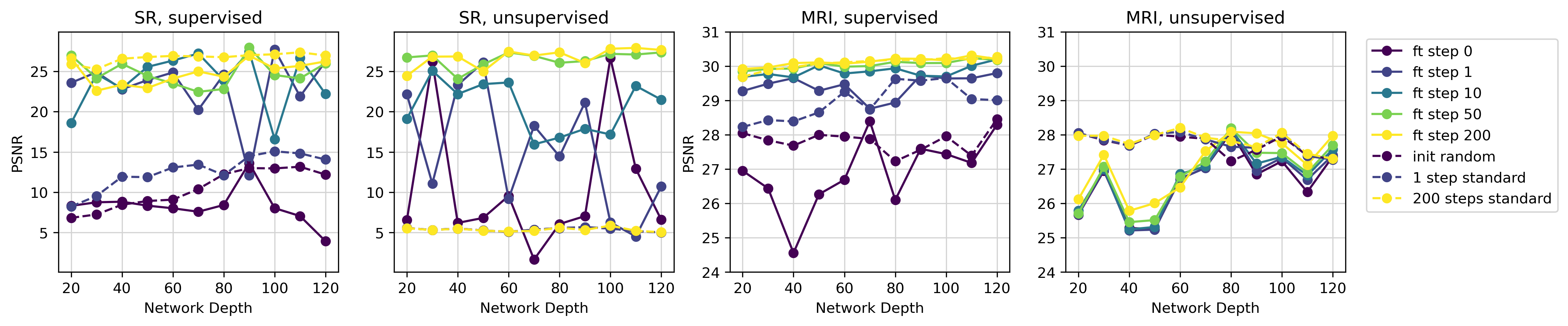}
\vspace{-1em}
    \caption{Results on a $\times 2$ super resolution problem (left) and on an MRI problem (right). On each plot, we display the reconstruction metrics for both the meta model, and models finetuned with different number of steps from the meta model. The supervised (resp. unsupervised) plots refer to networks trained and finetuned with supervised (resp. unsupervised) inner losses.}
    \label{fig:SR_MRI}
    \vskip-.5em
\end{figure}

\setlength{\tabcolsep}{1pt}
\begin{figure}
\footnotesize
    \centering
    \begin{tabular}{c c c c c c}
 \includegraphics[width=0.16\textwidth]{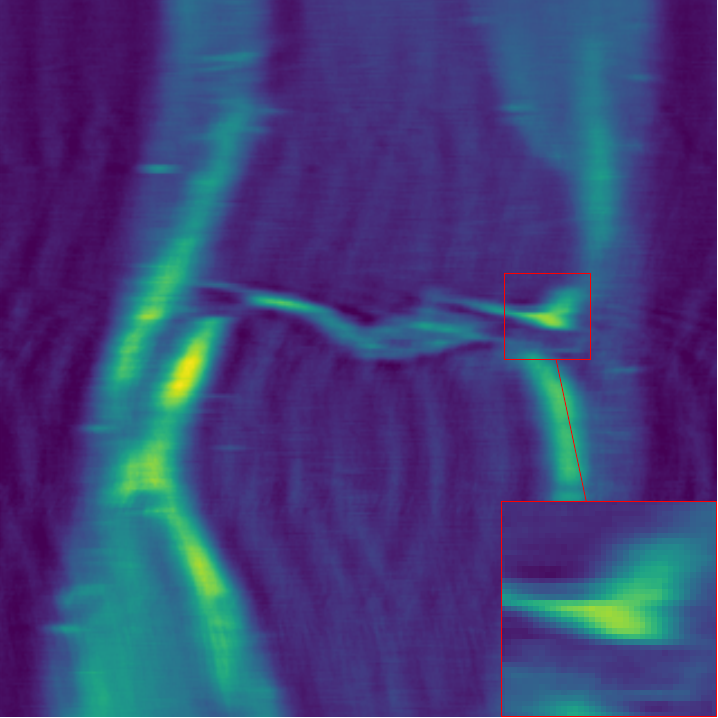} &  \includegraphics[width=0.16\textwidth]{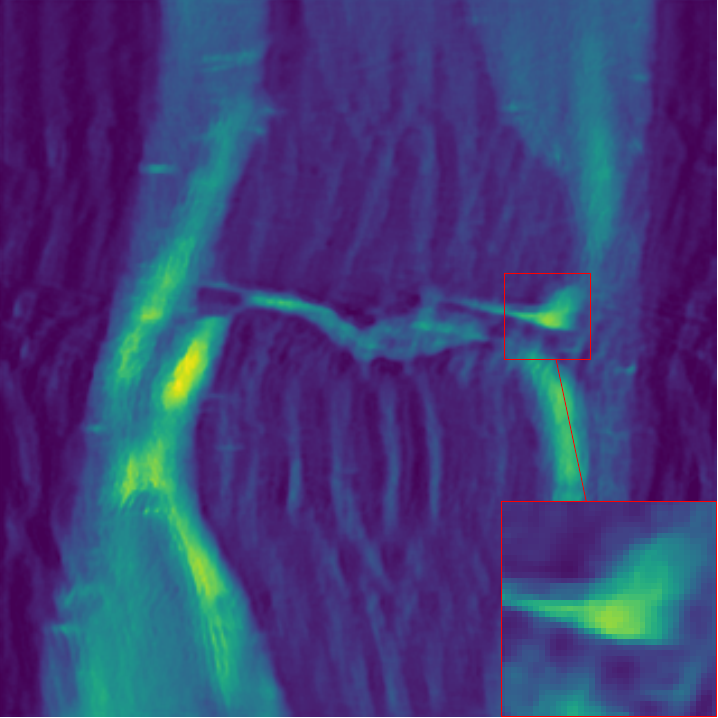} &  \includegraphics[width=0.16\textwidth]{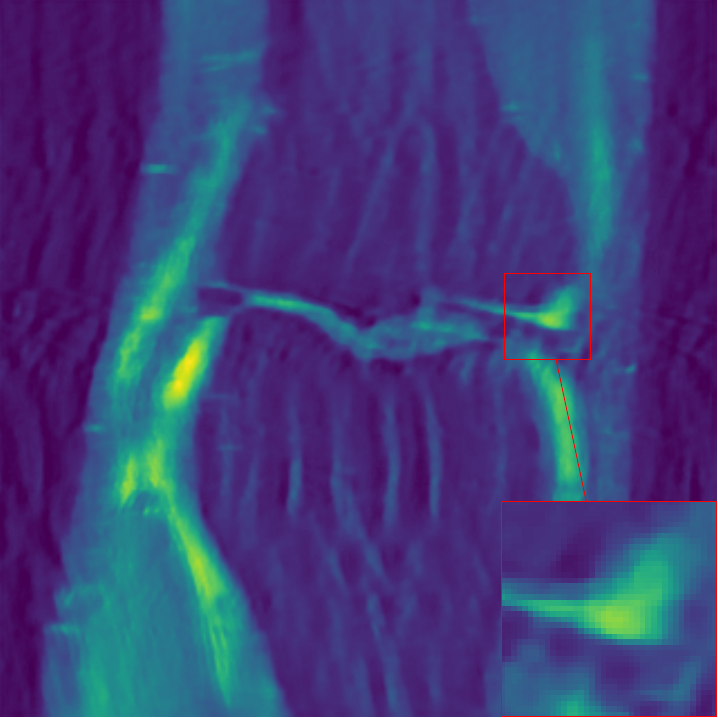} &  \includegraphics[width=0.16\textwidth]{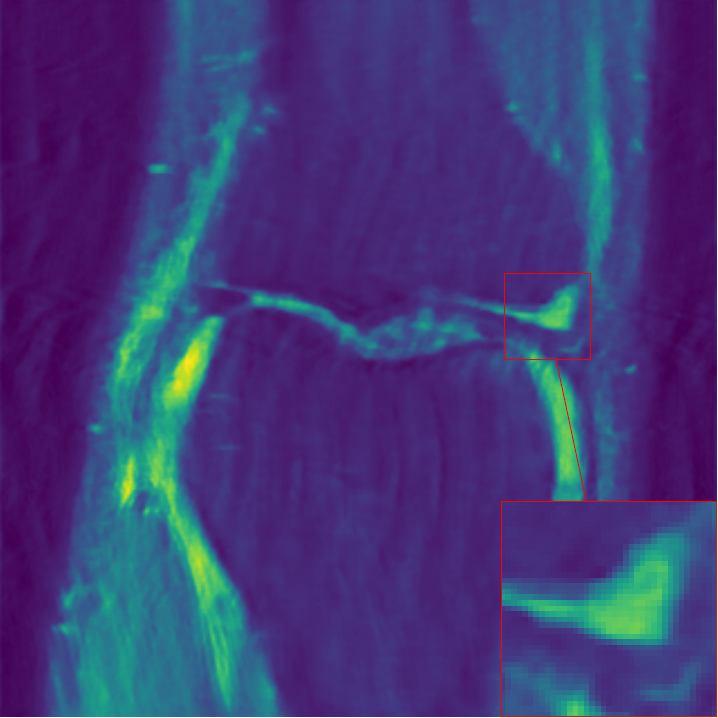} &  \includegraphics[width=0.16\textwidth]{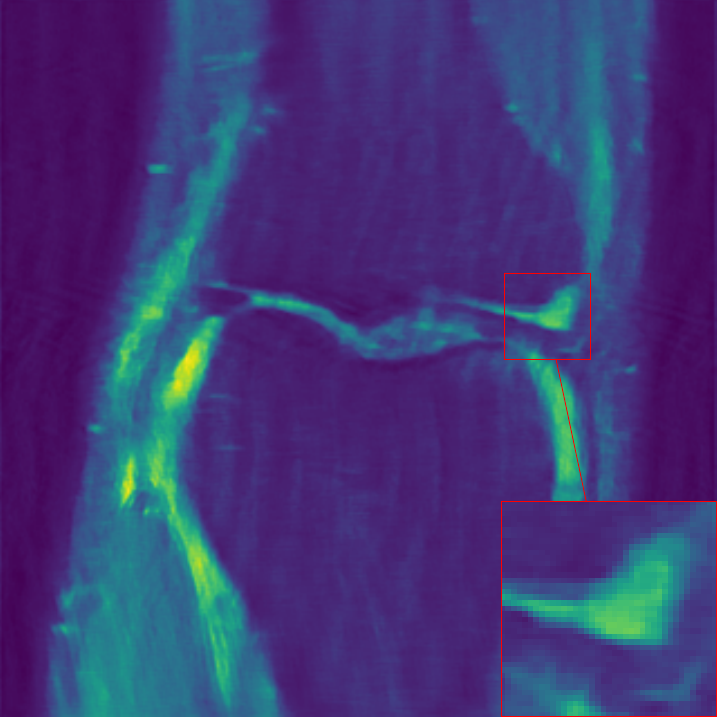} &  \includegraphics[width=0.16\textwidth]{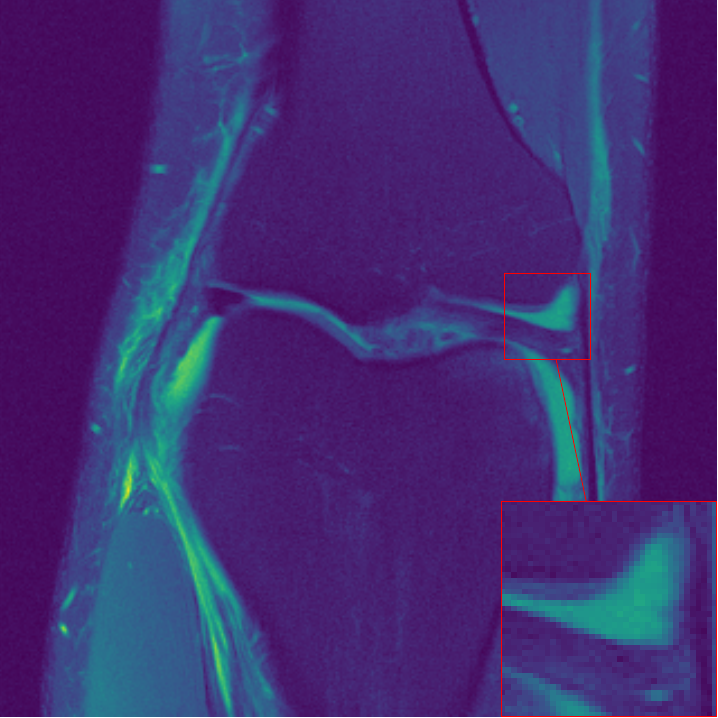}\\
 (a) PSNR=27.72 & (b) PSNR=27.62 & (c) PSNR=28.12 & (d) PSNR=29.80 & (e) PSNR=30.02 & (f)
\end{tabular}
\caption{Results on a sample from the MRI test set: (a) Simulated MRI scan with a factor 4 speedup. (b) Meta model after unsupervised training. (c) Meta model after supervised training. (d) Results after 1 step of supervised fine-tuning. (e) Results after supervised 20 steps of fine-tuning. (f) Ground truth. PSNR metric is indicated below each image.}
\label{fig:mri_images}
\end{figure}

\end{document}